\newcounter{mytempeqncnt}
\newtheorem{thm}{Theorem}
\newtheorem{defntn}{Definition}
\newenvironment{proof}[1][Proof]{\begin{trivlist}
\item[\hskip \labelsep {\bfseries #1}]}{\end{trivlist}}
\newcommand{\qed}{\nobreak \ifvmode \relax \else
      \ifdim\lastskip<1.5em \hskip-\lastskip
      \hskip1.5em plus0em minus0.5em \fi \nobreak
      \vrule height0.75em width0.5em depth0.25em\fi}
\newcommand{\indepid}[1]{\ensuremath{\stackrel{\text{#1}}{\sim}}}
\newcolumntype{P}[1]{>{\centering\arraybackslash}p{#1}}
\newcolumntype{M}[1]{>{\centering\arraybackslash}m{#1}}
\renewcommand{\algorithmicrequire}{\textbf{Input:}}
\renewcommand{\algorithmicensure}{\textbf{Output:}}
\newcommand{\finalcells}[2]{%
  \begingroup\sbox0{\begin{minipage}{3cm}\raggedright#1\end{minipage}}%
  \sbox2{\begin{minipage}{3cm}\raggedright#2\end{minipage}}%
  \xdef\finalheight{\the\dimexpr\ht0+\dp0+\smallskipamount\relax}%
  \xdef\finalheightB{\the\dimexpr\ht2+\dp2+\smallskipamount\relax}%
  \ifdim\finalheightB>\finalheight
    \global\let\finalheight\finalheightB
  \fi\endgroup
  \begin{minipage}[t][\finalheight][t]{3cm}\raggedright#1\end{minipage}&
  \begin{minipage}[t][\finalheight][t]{3cm}\raggedright#2\end{minipage}}
\begin{document}
\title{ConfidentCare: A Clinical Decision Support System for Personalized Breast Cancer Screening}

\author{Ahmed~M.~Alaa,~\IEEEmembership{Member,~IEEE}, Kyeong H. Moon, William Hsu,~\IEEEmembership{Member,~IEEE},~and Mihaela~van~der~Schaar,~\IEEEmembership{Fellow,~IEEE}
\thanks{A. M. Alaa, K. H. Moon, and M. van der Schaar are with the Department of Electrical Engineering, University of California Los Angeles, UCLA, Los Angeles, CA, 90024, USA (e-mail: ahmedmalaa@ucla.edu, mihaela@ee.ucla.edu). This work was supported by the NSF.}
\thanks{W. Hsu is with the Department of Radiological Sciences, UCLA, Los Angeles, CA 90024, USA (email: willhsu@mii.ucla.edu).} 
}
\markboth{}%
{Alaa \MakeLowercase{\textit{et al.}}: ConfidentCare: A Clinical Decision Support System for Personalized Breast Cancer Screening}
\maketitle
\begin{abstract}
Breast cancer screening policies attempt to achieve timely diagnosis by the regular screening of apparently healthy women. Various clinical decisions are needed to manage the screening process; those include: selecting the screening tests for a woman to take, interpreting the test outcomes, and deciding whether or not a woman should be referred to a diagnostic test. Such decisions are currently guided by clinical practice guidelines (CPGs), which represent a ``one-size-fits-all" approach that are designed to work well on average for a population, without guaranteeing that it will work well uniformly over that population. Since the risks and benefits of screening are functions of each patients features, {\it personalized screening policies} that are tailored to the features of individuals are needed in order to ensure that the right tests are recommended to the right woman. In order to address this issue, we present {\it ConfidentCare}: a computer-aided clinical decision support system that learns a personalized screening policy from the electronic health record (EHR) data. ConfidentCare operates by recognizing clusters of ``similar" patients, and learning the ``best" screening policy to adopt for each cluster. A cluster of patients is a set of patients with similar features (e.g. age, breast density, family history, etc.), and the screening policy is a set of guidelines on what actions to recommend for a woman given her features and screening test scores. ConfidentCare utilizes an iterative algorithm that applies $K$-means clustering to the women's feature space, followed by learning an active classifier (decision tree) for every cluster. The algorithm ensures that the policy adopted for every cluster of patients satisfies a predefined accuracy requirement with a high level of confidence. We show that our algorithm outperforms the current CPGs in terms of cost-efficiency and false positive rates.     
\end{abstract}
\begin{IEEEkeywords}
Breast cancer, Confidence measures, Clinical decision support, Personalized medicine, Supervised learning.
\end{IEEEkeywords}
\IEEEpeerreviewmaketitle{}
\section{Introduction}
\IEEEPARstart {P}{ersonalized} medicine is a new healthcare paradigm that aims to move beyond the current ``one-size-fits-all" approach to medicine and, instead, takes into account the features and traits of individual patients: their genes, micro-biomes, environments, and lifestyles \cite{refx1}-\cite{refxx3}. Vast attention has been recently dedicated to research in personalized medicine that builds on data science and machine learning techniques in order to customize healthcare policies. For instance, the White House has led the ``precision medicine initiative" \cite{refx3}, which is scheduled for discussion in the American Association for the Advancement of Science annual meeting for the year 2016 \cite{refx4}. Breast cancer screening is one example for a healthcare process that can potentially benefit from personalization. Screening is carried out in order to diagnose a woman with no apparent symptoms in a timely manner \cite{ref01}-\cite{ref03}. However, the screening process entails both benefits and costs that can differ from one patient to another \cite{ref04}, which signals the need for personalized screening policies that balance such benefits and costs in a customized manner. 

In this paper, we present ConfidentCare: a clinical decision support system (CDSS) that is capable of learning and implementing a personalized screening policy for breast cancer. The personalized screening policy is learned from data in the electronic health record (EHR), and is aimed to issue recommendations for different women with different features on which when should they take screening tests, which specific tests to take, and in what sequence. ConfidentCare discovers subgroups of ``similar" patients from the EHR data, and learns how to construct a screening policy that will work well for each subgroup with a high level of confidence. Our approach can provide significant gains in terms of both the cost-efficiency, and the accuracy of the screening process as compared to other ``one-size-fits-all" approaches adopted by current clinical practice guidelines (CPGs) that apply the same policy on all patients.    

\subsection{Breast cancer screening and the need for personalization}
 
While breast cancer screening is believed to reduce mortality rates \cite{ref03}, it is associated with the risks of ``overscreening", which leads to unnecessary costs, and ``overdiagnosis", which corresponds to false positive diagnoses that lead the patients to receive unnecessary treatments \cite{ref04}. While different patients have different levels of risks for developing breast cancer \cite{ref05}-\cite{ref09}; different tests have different monetary costs, and different levels of accuracy that depend on the features of the patient \cite{ref10}; common CPGs are aimed at populations, and are not typically tailored to specific individuals or significant subgroups \cite{ref11}-\cite{ref12x2}. 

Being designed to work well on ``average" for a population of patients, following CPGs may lead to overscreening or overdiagnosis for specific subgroups of patients, such as young women at a high risk of developing breast cancer, or healthy older women who may have a relatively longer expected lifespan \cite{ref13}. Moreover, some screening tests may work well for some patients, but not for others (e.g. a mammogram test will exhibit low accuracy for patients with high breast density \cite{ref10}), which can either lead to ``overdiagnosis" or poor tumor detection performance. Migrating from the ``one-size-fits-all" screening and diagnosis policies adopted by CPGs to more individualized policies that recognizes and approaches different subgroups of patients is the essence of applying the personalized medicine paradigm to the breast cancer clinical environment \cite{ref10}, \cite{ref13}-\cite{ref16}.  

\subsection{Contributions}
 
ConfidentCare is a computer-aided clinical decision support system that assists clinicians in making decisions on which (sequence of) screening tests a woman should take given her features. ConfidentCare resorts to the realm of supervised learning in order to learn a personalized screening policy that is tailored to subgroups of patients. In particular, the system recognizes different subgroups of patients, learns the policy that fits each subgroup, and prompts recommendations for screening tests and clinical decisions that if followed will lead to a desired accuracy requirement with a desired level of confidence. 

Fig. \ref{figu2} offers a system-level illustration for ConfidentCare\footnote{We will revisit this figure and give a more detailed explanation for the system components in the next Section}. The system operates in two stages: an offline stage in which it learns from the EHR data how to cluster patients, and what policy to follow for every cluster, and an execution stage in which it applies learned policy to every woman by first matching her with the closest cluster of patients in the EHR, and then approach her with the policy associated with that cluster. The main features of ConfidentCare are: 
\begin{itemize}
\item ConfidentCare discovers a set of patients' subgroups. Given certain accuracy requirements and confidence levels set by the clinicians, ConfidentCare ensures that every subgroup of patients would experience a diagnostic accuracy, and a confidence level on that accuracy, that meets these requirements. Thus, unlike CPGs that perform well only on average, ConfidentCare ensures that performance is reasonable for every discovered subgroups of patients.
\item ConfidentCare ensures cost-efficiency, i.e. patients are not overscreened, and the sequence of recommended screening tests minimizes the screening costs. 
\end{itemize}
We show that ConfidentCare can improve the screening cost-efficiency when compared with CPGs, can offer performance guarantees for individual subgroups of patients with a desired level of confidence, and outperforms the ``one-size-fits-all" approaches in terms of the accuracy of clinical decisions. Moreover, we show that ConfidentCare can achieve a finer granularity in its learned policy with respect to the patients feature space when it is provided with more training data. Our results emphasize the value of personalization in breast cancer clinical environments, and represent a first step towards individualizing breast cancer screening, diagnosis and treatment.

\subsection{Related works}

\subsubsection{Personalized (precision) medicine} 
While medical studies investigated the feasibility, potential and impact of applying the concepts of personalized medicine in the breast cancer clinical environments \cite{refx1}-\cite{refxx3}, \cite{ref10}-\cite{ref18}, \cite{ref21}\cite{ref21x}, none of these works provided specific tools or methods for building a personalized healthcare environment. For instance, in \cite{ref10}, it has been shown that CPGs, which recommend screening tests only based on the age ranges, such as the European Society for Medical Oncology (ESMO) CPG and the American Cancer Society (ACS) CPG, are not cost-efficient for many subgroups of patients, where cost-efficiency was measured in terms of ``costs per quality-adjusted life-year", and the authors recommended that screening should be personalized on the basis of a patient's age, breast density, history of breast biopsy, and the family history of breast cancer. Similar results were portrayed in other medical studies \cite{ref16}-\cite{ref18}, all suggesting that personalization using dimensions other than the age can yield more cost efficiency. 

Personalizing breast cancer screening is envisioned to not only improve the cost-efficiency of the process, but also to improve the diagnostic accuracy. This is because current CPGs do not consider the individual features of a woman when recommending screening tests; thus decisions on which a woman needs to take an additional screening tests, or proceed to a diagnostic test (biopsies) are not tailored to the woman's individual features. Therefore, false negative diagnoses rates reported by clinicians who follow CPGs reflect the average accuracy over all the population of patients, but CPGs give no guarantee that the diagnostic accuracy and the associated confidence levels of their guidelines are reasonable for every subgroup of ``similar" patients \cite{ref19}\cite{ref20}; such subgroups can be significantly different in their traits and hence may require being dealt with via different screening and diagnosis policies. 
\\
\subsubsection{Dynamic treatment regimes} Perhaps the work that relates most to this paper is that on Dynamic treatment regimes (DTRs) \cite{ref22x1}-\cite{ref22x5}. A DTR is typically a sequence of decision rules, with one rule per stage of clinical intervention, where each rule maps up-to-date patient information to a recommended treatment \cite{ref22x1}. DTRs are constructed via reinforcement learning techniques, such as Q-learning, where the goal is to find an ``optimal treatment policy": a sequential mapping of the patient's information to recommended treatments that would maximize the patient's long term reward. However, these works profoundly differ from the setting we consider in the following aspects: DTRs are only focused on recommending treatments and do not consider screening and diagnoses; cost-efficiency is not considered in the design of DTR policies since they only consider the ``value of information" in recommending treatments; and finally, while confidence measures can be computed for policies in DTRs \cite{ref22x3}, the policies themselves are not designed in a way that guarantees to the clinician a certain level of reliability for every subgroup of patients.  
\\
\subsubsection{Active classification for medical diagnosis} Screening and diagnostic clinical decisions typically involve ``purchasing costly information" for the patients, which relates to the paradigm of active learning \cite{ref25}-\cite{ref31}. We note that in our setting, clinicians ``purchase" costly features of the patients rather than purchasing unobserved labels, which makes our setting profoundly different from the conventional active learning framework \cite{ref25}-\cite{ref22z4}. Classification problems in which some features are costly are referred to as ``active classification" \cite{ref24}, or ``active sensing" \cite{ref30x}. Such problems have been addressed in the context of medical diagnosis in \cite{ref24}-\cite{ref31}, but all these works correspond to solving an unconstrained optimization problem that targets the whole population, for which no personalized accuracy or confidence guarantees can be claimed. Table \ref{tab1} positions our paper to the existing literature with respect to various aspects.    
\begin{table}[t!]
\captionsetup{font= small}
\caption{Comparison against existing literature}
\begin{center}
 \centering
    \begin{tabular}{||M{1.5cm}||M{1.75cm}|M{2cm}|M{1.25cm}|} \hline
        {\bf Method} & {\bf Personalization} & {\bf Accuracy and confidence guarantees} & {\bf Cost-efficiency}  \\ \hline \hline
				    DTRs & Yes & No &  No   \\ \hline
						Active classification & No & No & Yes   \\ \hline
						ConfidentCare & Yes & Yes & Yes \\ 
    \hline
    \end{tabular}
\end{center}
\label{tab1}
\end{table}

The rest of the paper is organized as follows. In Section II, we present the system components and the problem formulation for designing personalized screening policies. Next, in Section III, we propose the ConfidentCare algorithm. In Section IV, we carry out various experiments using a dataset collected at the UCLA medical center in order to highlight the advantages of ConfidentCare. Finally, in Section V, we draw our conclusions.   

\section{ConfidentCare: system components and operation}
\subsection{System operation}
ConfidentCare is a computer-aided clinical decision support system that learns a personalized screening policy from the EHR data. By a ``personalized screening policy" we mean: a procedure for recommending an action for the clinician to take based on the individual features of the patient, and the outcomes of the screening tests taken by that patient. An action can be: letting the patient take an additional screening test, proceed to a diagnostic test (e.g. biopsy), or just recommend a regular follow-up.       
 
The tasks that ConfidentCare carries out can be summarized as follows:
\begin{itemize}
\item {\bf Discover the granularity of the patient's population}: The system is provided with training data from the EHR that summarizes previous experiences of patients in terms of the screening tests they took, their test results, and their diagnoses. From such data, ConfidentCare recognizes different {\it subgroups} or {\it clusters} of patients who are similar in their features and can be approached using the same screening policy. 

\item {\bf Learn the best policy for each subgroup of patients}: Having discovered the distinct subgroups of patients from the training data, ConfidentCare finds the best screening policy for each of these subgroups; by a ``best" policy we mean: a policy that minimizes the screening costs while maintaining a desired level of diagnostic accuracy, with a high level of confidence that is set by the clinicians. The more training data provided to ConfidentCare, the more ``granular" is the learned policy in the sense that more subgroups of patients can be discovered, and thus the extent of personalization and precision would increase consequently.

\item {\bf Identify the incoming patients' subgroups and execute their personalized policies}: After being trained, ConfidentCare handles an incoming patient by observing her features, identifying the subgroup to which she belongs, and decides the appropriate screening policy that needs to be followed for her.    
\end{itemize}

ConfidentCare can be thought of as an algorithm that stratifies the pool of patients into clusters, and automatically generates multiple CPGs, one for each cluster, in order to issue the best customized guidelines to follow for each cluster. The algorithm ensures that the accuracy of clinical decisions for each cluster satisfy a certain requirement with a certain confidence level. 

\subsection{Idiosyncrasies of the breast cancer clinical environment}

Patients' features fall into two categories: {\it personal features}, and {\it screening features}. Personal features are observable at no cost, and are accessible without the need for taking any screening tests, for that they are provided by the patient herself via a questionnaire, etc. The personal features include numerical and categorical features such as: age, age at menarche, number of previous biopsies, breast density, age at first child birth, and the family history \cite{ref10}. 

Screening tests reveal another set of costly features for the patient, which we call: the screening features. The screening features comprise the radiological assessment of breast images, usually encoded in the form of BI-RADS (Breast Imaging Report and Data System) scores \cite{ref19}. The BI-RADS scores take values from the set $\{1,2,3,4A,4B,4C,5,6\},$ the interpretation of which is given in Table II. BI-RADS scores of 3 or above are usually associated with followup tests or biopsy. The descriptions of all the personal and screening features are shown in Table III. 

ConfidentCare considers three possible multimedia-based screening tests in the screening stage, which represent three different imaging modalities: mammogram (MG), ultrasound (US), and magnetic resonance imaging (MRI). Every screening test is associated with different costs and risks, which are functions of the patients' personal features. We consider a generic cost function that incorporates both the misclassification costs in addition to the monetary costs (the detailed cost model is provided in the next subsection) \cite{ref18}. Other screening features can also include genetic ones, yet we do not consider these in this paper since such features are not revealed by the screening tests under consideration. However, ConfidentCare algorithm together with the theoretical framework tackled in this section can handle any generic class of features and tests, including genetic tests.

ConfidentCare recommends an action upon observing the outcome of a specific screening test. The actions can either be: recommend a regular (1 year) followup, recommend a diagnostic test (biopsy), or an intermediate recommendation for an additional (costly) screening test (short-ter followup). The final action recommended by the screening policy is either to proceed to a diagnostic test, or to take a regular followup (screening) test after 1 or 2 years. The accuracy measures that we adopt in this paper are: the false positive rate (FPR) and the false negative rate (FNR), which are defined as follows: the FPR is the probability that a patient with a negative true diagnosis (benign or no tumor) is recommended to proceed to a diagnostic test, whereas the FNR is the probability that a patient with a positive true diagnosis (malignant tumor) is recommended to take a regular followup screening test \cite{ref22}.  

\begin{table}[t!]
\captionsetup{font= small}
\caption{BI-RADS scores interpretation}
\begin{center}
 \centering
    \begin{tabular}{||M{2cm}||M{4.5cm}|} \hline
        {\bf Score} & {\bf Interpretation} \\ \hline \hline
				    0 & Incomplete.   \\ \hline
						1 & Negative.  \\ \hline
						2 & Benign. \\ \hline
						3 & Probably benign. \\ \hline
						4A & Low suspicion for malignancy. \\ \hline
						4B & Intermediate suspicion of malignancy. \\ \hline
						4C & Moderate concern. \\ \hline
						5 & Highly suggestive of malignancy. \\ \hline
						6 & Known biopsy -- proven malignancy. \\ 
    \hline
    \end{tabular}
\end{center}
\label{tab1}
\end{table}  
\begin{table}[t!]
\captionsetup{font= small}
\caption{Personal and screening features}
\begin{center}
 \centering
    \begin{tabular}{||M{2cm}||M{4.5cm}|} \hline
        {\bf Personal feature} & {\bf Description and range of values} \\ \hline \hline
					  Age information & Age at screening test time-age at menarche-age at first child birth.   \\ \hline
						Family history & Number of first degree relatives who developed breast cancer (First degree relatives are: mother, sister, and daughter).\\ \hline
						Number of previous biopsies & An integer number of biopsies. \\ \hline
						Breast density & Described by four categories: 
						\begin{itemize}
						\item {\bf Category 1}: The breast is almost entirely fat (fibrous and glandular tissue $<25\%$).
						\item {\bf Category 2}: There are scattered fibro-glandular densities (fibrous and glandular tissue $25\%$ to 50$\%$).
						\item {\bf Category 3}: The breast tissue is heterogeneously dense (fibrous and glandular tissue $50\%$ to $75\%$).
						\item {\bf Category 4}: The breast tissue is extremely dense (fibrous and glandular tissue $>75\%$).
						\end{itemize}
						    \\ 
    \end{tabular}
		    \begin{tabular}{||M{2cm}||M{4.5cm}|} \hline
        {\bf Screening features} & {\bf Description} \\ \hline \hline
				    MG BI-RADS & Radiological assessment of the mammogram imaging.   \\ \hline
						US BI-RADS & Radiological assessment of the ultrasound test.  \\ \hline
						MRI BI-RADS & Radiological assessment of the MRI test. \\ 
    \hline
    \end{tabular}
\end{center}
\label{tab2}
\end{table} 

\subsection{System components}

ConfidentCare is required to deal with the environment specified above and carry out the three tasks mentioned earlier, which are: discovering the granularity of the patients' population, learning the appropriate policies for each subgroup of patients, and handling incoming patients by executing the learned, personalized policy that best matches their observed features and traits. 

In the following, we describe the ConfidentCare algorithm, which implements those tasks using supervised learning. The algorithm requires the following inputs from the clinician: 
\begin{itemize}      
\item A training set comprising a set of patients with their associated features, screening tests taken, and their true diagnoses.
\item A restrictions on the maximum tolerable FNR.
\item A desired confidence level on the FNR in the diagnoses issued by the system.
\end{itemize}
Provided by the inputs above, ConfidentCare operates through two basic stages:
\begin{itemize}      
\item {\bf Offline policy construction stage:} Given the training data and all the system inputs, ConfidentCare implements an iterative algorithm to cluster the patients' personal feature space, and then learns a separate {\it active classifier} for each cluster of patients. Each active classifier associated with a cluster of patients is designed such that it minimizes the overall screening costs, and meets the FNR and confidence requirements. The algorithm runs iteratively until it maximizes the number of patient clusters for which there exist active classifiers that can guarantee the performance and confidence requirements set by the clinician, thereby ensuring the maximum level of personalization, i.e. ensure that the space of all patients' personal features is segmented into the finer possible set of partitions, where the performance requirements hold for each of such partitions.     
\\
\item {\bf Policy execution stage:} Having learned a policy based on the training data, ConfidentCare executes the policy by observing the personal features of an incoming patient, associate her with a cluster (and consequently, an already learned active classifier), and then the classifier handles the patient by recommending screening tests and observing the test outcomes, until a final action is recommended.       
\end{itemize}
 
Fig. \ref{figu2} illustrates the components and operation of ConfidentCare. In the {\it offline policy construction stage}, ConfidentCare is provided with training data from the EHR, the maximum tolerable FNR, and the desired level of confidence. ConfidentCare runs an iterative algorithm that clusters the patients' personal feature space, and learns the best active classifier (the most cost-efficient classifier that meets the FNR accuracy and confidence requirements) for each cluster. In the {\it policy execution stage}, ConfidentCare observes the personal features of the incoming patient, associates her with a patients cluster, and then recommends a sequence of screening tests to that patient until it issues a final recommendation.

For instance, assume that the set of personal features are given by a tuple {\it (Age, breast density, number of first degree relatives with breast cancer)}. A patient with a personal features vector (55, 40$\%$,0) is approached by ConfidentCare. The system associates the patient with a certain cluster of patients that it has learned from the EHR data. Let the best policy for screening  patients in that cluster, as computed by ConfidentCare, is to start with mammogram. If the clinician followed such a recommendation, ConfidentCare observed the mammogram BI-RADS score, say a score of 1, and then it decides to issue a final recommendation for a regular followup. If the BI-RADS score was higher, say a score of 4A, then the system recommends an additional imaging test, e.g. an MRI, and then observes the BI-RADS score of the MRI before issuing further recommendations. The process proceeds until a final recommendation is issued. 

\section{The personalized screening policy design problem}

ConfidentCare uses supervised learning to learn a personalized screening policy from the EHR. In this subsection, we formally present the learning model under consideration.
\\ 
\subsubsection{Patients' features}
Let $\mathcal{X}_{d}$, $\mathcal{X}_{s},$ and $\mathcal{Y}$ be three spaces, where $\mathcal{X}_{d}$ is the patients' $d$-dimensional personal feature space, $\mathcal{X}_{s} = \mathcal{B}^{s}$ is the $s$-dimensional space of all screening features, where $\mathcal{B} = \{1,2,3,4A,4B,4C,5,6\}$, and $\mathcal{Y}$ is the space of all possible diagnoses, i.e. $\mathcal{Y} = \{0,1\},$ where $0$ corresponds to a {\it negative} diagnosis, and $1$ corresponds to a {\it positive} diagnosis. The patients' feature space is $(d+s)$-dimensional and is given by $\mathcal{X} = \mathcal{X}_{d} \times \mathcal{X}_{s}$. Each instance in the feature space is a $(d+s)$-dimensional vector ${\bf x} = ({\bf x}_{d}, {\bf x}_{s}) \in \mathcal{X}, {\bf x}_{d} \in \mathcal{X}_{d}, {\bf x}_{s} \in \mathcal{X}_{s},$ the entries of which correspond to the personal and screening features listed in Table III, and are drawn from an unknown stationary distribution $\mathcal{D}$ on $\mathcal{X} \times \mathcal{Y}$, i.e. $({\bf x}, y) \sim \mathcal{D}$, where $y \in \mathcal{Y}$, and $\mathcal{D}_{x}$ is the marginal distribution of the patients' features, i.e. ${\bf x} \sim \mathcal{D}_{x}$. The set of $s$ available tests is denoted by $\mathcal{T}$, where $\left|\mathcal{T}\right| = s$.

The personal features are accessible by ConfidentCare with no cost, whereas the screening features are costly, for that the patient needs to take screening tests to reveal their values. Initially, the entries of ${\bf x}_{s}$ are blocked, i.e. they are all set to an unspecified value $\left<*\right>$, and they are observable only whenever the corresponding screening tests are taken, and their costs are paid. We denote the space of all possible screening test observations as $\mathcal{X}^{*}_{s} = \left\{\mathcal{B}, \left<*\right>\right\}^{s}$. ConfidentCare issues recommendations and decisions based on both the fully observed personal features ${\bf x}_{d}$, and a partially observed version of ${\bf x}_{s}$, which we denote as ${\bf x}_{s}^{*} \in \mathcal{X}^{*}_{s}.$ The screening feature vector ${\bf x}_{s}$ can indeed be fully observed, but this would be the case only if all the screening tests were carried out for a specific patient.   

In order to clarify the different types of features and their observability, consider the following illustrative example. Assume that we only have two personal features: the age and the number of first degree relatives who developed breast cancer, whereas we have three screening tests $\mathcal{T} = \{\mbox{MG}, \mbox{MRI}, \mbox{US}\}$. That is, we have that $d = 2$ and $s = 3$. Initially, ConfidentCare only observes the personal features, e.g. observing a feature vector $(42, 1, \left<*\right>, \left<*\right>, \left<*\right>)$ means that the patient's age is 42 years, she has one first degree relative with breast cancer, and she took no screening tests. Based on the learned policy, ConfidentCare then decides which test should the patient take. For instance, if the policy decides that the patient should take a mammogram test, then the feature vector can then be updated to be $(42, 1, 2, \left<*\right>, \left<*\right>)$, which means that the BI-RADS score of the mammogram is 2. ConfidentCare can then decide what action should be recommended given that the BI-RADS score of the mammogram is 2: classify the patient as one who needs to proceed to a diagnostic test, or classify the patient as one who just needs to take a regular followup test in a 1 year period, or request an additional screening test result in order to be able to issue a confident classification for the patient.

\subsubsection{Active classification}
The process described in the previous subsection is a typical active classification process: a classifier aims to issue either a positive or a negative diagnosis (biopsy or regular followup) for patients based on their costly features (test outcomes). Such a classifier is active in the sense that it can query the clinician for costly feature information rather than passively dealing with a given chunk of data \cite{ref24}. This setting should not be confused with conventional {\it active learning}, where labels (and not features) are the costly piece of information which the classifier may need to purchase \cite{ref25}\cite{ref26}.
In the following, we formally define an {\it active classifier}.   
\begin{defntn}
{\bf (Active classifier)} An active classifier is a hypothesis (function)
\[h: \mathcal{X}^{*}_{s} \rightarrow \mathcal{Y} \cup \mathcal{T}. \,\,\,\,\,\,\,\, \IEEEQEDhere\] 
\end{defntn} 
Thus, the active classifier either recommends a test in $\mathcal{T}$, or issues a final recommendation $y \in \mathcal{Y}$, where $y=1$ corresponds to recommending a biopsy (positive screening test result) and $y = 0$ is recommending a regular followup (negative screening test result), given the current, partially observed screening feature vector ${\bf x}^{*}_{s} \in \mathcal{X}^{*}_{s}$. Whenever a test is taken, the screening feature vector is updated, based upon which the classifier either issues a new recommendation. 

For instance, the range of the function $h$ in our setting can be $\{0,1,\mbox{MG},\mbox{MRI},\mbox{US}\}$, i.e. $\mathcal{Y} = \{0,1\}$ and $\mathcal{T} = \{\mbox{MG},\mbox{MRI},\mbox{US}\}$. If $h({\bf x}^{*}_{s}) = 0$ (or $1$), then the classifier issues -with high confidence on the accuracy- a final recommendation for a biopsy or a regular followup for the patient with a screening feature vector ${\bf x}^{*}_{s} \in \mathcal{X}^{*}_{s}$, whereas if $h({\bf x}^{*}_{s}) = \mbox{MG},$ then the classifier recommends the patient with a screening feature vector ${\bf x}^{*}_{s}$ to take a mammogram test. Note that if $h((\left<*\right>, \left<*\right>, \left<*\right>)) = 0,$ then the classifier recommends no tests for any patient.  
\begin{figure}[t!]
    \centering
    \includegraphics[width=3.5 in]{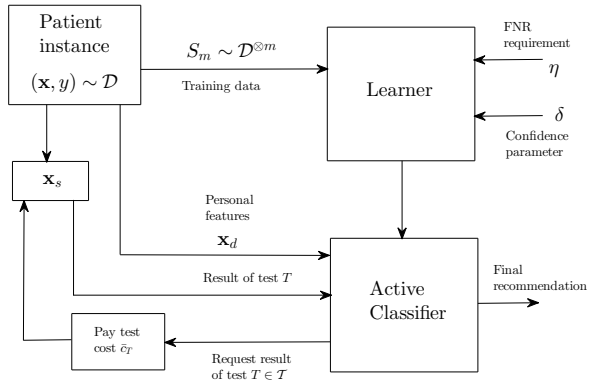}     
    \caption{Framework for the active classifier construction and operation.}
		\label{actvclass}
\end{figure} 

\subsubsection{Designing active classifiers}
Designing an active classifier for the breast cancer screening and diagnosis problem under consideration cannot rely on conventional loss functions, such as the $0-1$ loss function. This is because the classification problem involves costly decision making under uncertainty, and different types of diagnostic errors (false negatives and false positives) have very different consequences. Hence, our notion of learning needs to be {\it decision-theoretic}, and new objective functions and learning algorithms need to be defined and formulated.

We use an {\it inductive bias} approach for designing the active classifier; we restrict our learning algorithm to pick one hypothesis $h$ from a specific hypothesis class $\mathcal{H}$. That is, we compensate our lack of knowledge of the stationary distribution $\mathcal{D}$ by inducing a prior knowledge on the set of possible hypothesis that the learning algorithm can output: a common approach for designing {\it agnostic} learners \cite{ref27}. Unlike the conventional supervised learning paradigm which picks a hypothesis that minimizes a loss function, we will design a learning algorithm that picks a hypothesis from $\mathcal{H}$, such that the overall cost of screening is minimized, while maintaining the FNR to be below a predefined threshold, with a desired level of confidence; a common design objective for breast cancer clinical systems \cite{ref20}. The screening cost involves both the monetary costs of the screening tests, as well as the {\it misclassification cost} reflected by the FPR. The FNR experienced by the patients when using an active classifier $h$ is given by
\begin{equation}      
\mbox{FNR}(h) = \mathbb{P}\left(h({\bf x}^{*}_{s}) = 0 \left|h({\bf x}^{*}_{s}) \in \mathcal{Y}, y = 1\right.\right),
\label{eqtn1}
\end{equation}
whereas the FPR is given by
\begin{equation}      
\mbox{FPR}(h) = \mathbb{P}\left(h({\bf x}^{*}_{s}) = 1 \left|h({\bf x}^{*}_{s}) \in \mathcal{Y}, y = 0\right.\right).
\label{eqtn2}
\end{equation}
That is, the FNR is the probability that classifier $h$ recommends a regular followup (outputs a $0$) for a screening feature vector ${\bf x}_{s}$, when the patient takes all the recommended tests, given that the true diagnosis was $1$, whereas the FPR is the probability that the classifier recommends a biopsy (outputs a $1$) when the true diagnosis is $0$. Both types of error are very different in terms of their implications, and one can easily see that the FNR is more crucial, since it corresponds to misdiagnosing a patient with breast cancer as being healthy \cite{ref21}. Thus, the system must impose restrictions on the maximum tolerable FNR. On the other hand, the FPR is considered as a misclassification cost that we aim at minimizing given a constraint on the FNR \cite{ref18}. 

Now we define the screening cost function. Let $c_{T}$ be the monetary cost of test $T \in \mathcal{T}$, which is the same for all patients, and let $\bar{c}_{T}$ be the normalized monetary cost of test $T$, given by $\bar{c}_{T} = \frac{c_{T}}{\sum_{T^{'} \in \mathcal{T}} c_{T^{'}}}$. Let $\bar{c}(h({\bf x}_{s}))$ be the total (normalized) monetary test costs that classifier $h$ will pay in order to reach a final recommendation for a patient with screening feature vector ${\bf x}_{s}$. The average monetary cost of a hypothesis $h$ is denoted as $\bar{c}(h)$, and is given by $\bar{c}(h) = \mathbb{E}\left[\bar{c}(h({\bf x}_{s}))\right],$ where the expectation is taken over the randomness of the screening test results. To illustrate how the cost of a hypothesis is computed, consider the following example. Let the normalized costs of MG, US, and MRI be 0.1, 0.2 and 0.7 respectively. Initially, the classifier observes ${\bf x}^{*}_{s} = \left(\left<*\right>, \left<*\right>, \left<*\right>\right).$ Assume a hypothesis $h_{1}$ and a patient with a screening features vector ${\bf x}_{s} = \left(3,1,1\right)$. The hypothesis $h_{1}$ has the following functional form: $h_{1}(\left(\left<*\right>, \left<*\right>, \left<*\right>\right)) = \mbox{MG},$ i.e. it initially recommends a mammogram for every patient, $h_{1}(\left(3, \left<*\right>, \left<*\right>\right)) = \mbox{MRI},$ and $h_{1}(\left(3, 1, \left<*\right>\right)) = 0.$ Hence, using $h_{1}$, the screening cost is 0.8. Let $h_{2}$ be another hypothesis with $h_{2}(\left(\left<*\right>, \left<*\right>, \left<*\right>\right)) = \mbox{MG},$ $h_{2}(\left(3, \left<*\right>, \left<*\right>\right)) = 0.$ In this case, we have that $\bar{c}(h_{2}) = 0.1$, which is less than $\bar{c}(h_{1}) = 0.8$, yet it is clear that $h_{2}$ has a higher risk for a false negative diagnosis.

Let $C(h)$ be the {\it cost function} for hypothesis $h$, which incorporates both the average monetary costs and the average misclassification costs incurred by $h$. Formally, the cost function is defined as        
\begin{equation}      
C(h) = \gamma \, \mbox{FPR}(h) + (1-\gamma) \, \bar{c}(h), 
\label{eqtn3}
\end{equation}
where $\gamma \in [0,1]$ is a parameter that balances the importance of the misclassification costs compared to the monetary cost, i.e. $\gamma = 0$ means that ConfidentCare builds the classifiers by solely minimizing monetary costs, whereas $\gamma = 1$ means that ConfidentCare cares only about the misclassification costs. An optimal active classifier is denoted by $h^{*}$, and is the one that solves the following optimization problem
\begin{equation}
\begin{aligned}
& \underset{h \in \mathcal{H}}{\text{min}}
& &  C(h) \\ 
& \text{{\it s.t.}}
& & \mbox{FNR}(h) \leq \eta.
\end{aligned} 
\label{eqtn4}
\end{equation} 
Obtaining the optimal solution for (\ref{eqtn4}) requires knowledge of the distribution $\mathcal{D}$, in order to compute the average FNR and cost in (\ref{eqtn4}). However, $\mathcal{D}$ is not available for the (agnostic) learner. Instead, the learner relies on a size-$m$ training sample $S_{m}=({\bf x}_{i},y_{i})_{i \in [m]},$ with $S_{m} \indepid{i.i.d} \mathcal{D}^{\otimes m}$, where $\mathcal{D}^{\otimes m}$ is the product distribution of the $m$ patient-diagnosis instances $({\bf x}_{i},y_{i})_{i \in [m]}$. The training sample $S_{m}$ feeds a learning algorithm $\mathcal{A}: \mathcal{S}_{m} \rightarrow \mathcal{H},$ where $\mathcal{S}_{m}$ is the space of all possible size-$m$ training samples. The learning algorithm $\mathcal{A}$ simply tries to solve (\ref{eqtn4}) by picking a hypothesis in $\mathcal{H}$ based only on the observed training sample $S_{m}$, and without knowing the underlying distribution $\mathcal{D}$. Fig. \ref{actvclass} depicts the framework for learning and implementing an active classifier.

\subsubsection{Learnability of active classifiers}
In order to evaluate the learner, and its ability to construct a reasonable solution for (\ref{eqtn4}), we define a variant of the {\it probably approximately correct} criterion for learning active classifiers that minimize the classification costs with a constraint on the FNR (conventional definitions for PAC-learnability can be found in \cite{ref24} and \cite{ref27}). Our problem setting, and our notion of learning depart from conventional supervised learning in that the learner is concerned with finding a feasible, and (almost) optimal solution for a constrained optimization problem, rather than being concerned with minimizing an unconstrained loss function. 

In the following, we define a variant for the notion of PAC-learnability, the {\it probably approximately optimal} (PAO) learnability, of a hypothesis set $\mathcal{H}$ that fits our problem setting.    
\begin{defntn}
{\bf (PAO-learning of active classifiers)} We say that active classifiers drawn from the hypothesis set $\mathcal{H}$ are {\it PAO-learnable} using an algorithm $\mathcal{A}$ if:
\begin{itemize}
\item $\mathcal{H}^{*} = \{h: \forall h \in \mathcal{H}, \mbox{FNR}(h) \leq \eta\} \neq \emptyset$, with $h^{*} = \arg \inf_{h \in \mathcal{H}^{*}} C(h)$.
\item For every $(\epsilon_{c},\epsilon,\delta) \in [0,1]^{3}$, there exists a polynomial function $N_{\mathcal{H}}^{*}(\epsilon, \epsilon_{c}, \delta) = poly(\frac{1}{\epsilon_{c}},\frac{1}{\epsilon},\frac{1}{\delta}),$ such that for every $m \geq N_{\mathcal{H}}^{*}(\epsilon, \epsilon_{c}, \delta),$ we have that   
\[\mathbb{P}_{S_{m}\sim\mathcal{D}^{\otimes m}}\left(C\left(\mathcal{A}\left(S_{m}\right)\right) \geq C(h^{*}) + \epsilon_{c} \right) \leq \delta,\]
\[\mathbb{P}_{S_{m}\sim\mathcal{D}^{\otimes m}}\left(\mbox{FNR}(\mathcal{A}\left(S_{m}\right)) \geq \mbox{FNR}(h^{*}) + \epsilon \right) \leq \delta,\]
where $N_{\mathcal{H}}^{*}(\epsilon, \epsilon_{c}, \delta)$ is the {\it sample complexity} of the classification problem. \,\,\, \IEEEQEDhere
\end{itemize}
\end{defntn} 
PAO-learnability reflects the nature of the learning task of the active classifier; a learning algorithm is ``good" if it picks the hypothesis that, with a probability $1-\delta$, is within an $\epsilon$ from the region of feasible region, and within an $\epsilon_{c}$ from the optimal solution. In that sense, a hypothesis set is PAO-learnable if there exists a learning algorithm that can find, with a certain level of confidence, a probably approximately feasible and optimal solution to (\ref{eqtn4}).  

Note that the sample complexity $N^{*}_{\mathcal{H}}(\epsilon, \epsilon_{c}, \delta)$ does not depend on $\eta$, yet the feasibility of the optimization problem in (\ref{eqtn4}), and hence the learnability of the hypothesis class, depends on both the value of $\eta$ and the hypotheses in $\mathcal{H}$. From a {\it bias-variance decomposition} point of view, one can view $\eta$ as a restriction on the amount of inductive bias a hypothesis set can have with respect to the FNR, whereas $\epsilon$, $\epsilon_{c}$ and $\delta$ are restrictions on the true cost and accuracy estimation errors that the agnostic learner would encounter. The threshold $\eta$ qualifies or disqualifies the whole hypothesis set $\mathcal{H}$ from being a feasible set for learning the active classifier, whereas the tuple $(\epsilon, \epsilon_{c}, \delta)$ decides how many training samples do we need in order to learn a qualified hypothesis set $\mathcal{H}$. The notion of PAO-learnability can be thought of as a decision-theoretic variant of the conventional PAC-learnability, for that the learner is effectively solving a constrained cost-minimization problem. 
    
\subsubsection{Patients feature space partitioning}
ConfidentCare learns a different classifier separately for every subgroup of ``similar" patients, which is the essence of personalization. However, the clustering of patients into subgroups is not an input to the system, but rather a task that it has to carry out; ConfidentCare has to bundle patients into $M$ subgroups, each of which will be assigned a different active classifier that is tailored to the features of the patients in that subgroup. The value of $M$ reflects the level of personalization, i.e. the larger $M$ is, the larger is the number of possible classifiers that are customized for every subgroup. Partitioning the patient's population into subgroups is carried out on the basis of the personal features of the patients; patients are categorized based on their personal, fully observable features.

Let $(\mathcal{X}_{d}, d_{x})$ be a {\it metric space} associated with the personal feature space $\mathcal{X}_{d}$, where $d_{x}$ is a {\it distance metric}, i.e. $d_{x}: \mathcal{X}_{d}\times\mathcal{X}_{d} \rightarrow \mathbb{R}_{+}$. We define an $M$-partitioning $\pi_{M}(\mathcal{X}_{d}, d_{x})$ over the metric space $(\mathcal{X}_{d}, d_{x})$ as a set of disjoint subsets of $\mathcal{X}_{d}$, i.e. $\pi_{M}(\mathcal{X}_{d}, d_{x}) = \{\mathcal{C}_{1}, \mathcal{C}_{2},.\,.\,., \mathcal{C}_{M}\},$ where $\mathcal{C}_{i} \subseteq \mathcal{X}_{d},$ $\bigcup_{i=1}^{M}\mathcal{C}_{i} = \mathcal{X}_{d},$ and $\mathcal{C}_{j} \bigcap \mathcal{C}_{i} = \emptyset, \forall i \neq j$. We define a function $\pi_{M}(\mathcal{X}_{d}, d_{x}; {\bf x}_{d})$ as a map from the patient's personal feature vector ${\bf x}_{d}$ to the index of the partition to which she belongs, i.e. $\pi_{M}(\mathcal{X}_{d}, d_{x}; {\bf x}_{d}) = j$ if ${\bf x}_{d} \in \mathcal{C}_{j}$. 

Each partition is simply a subgroup of patients who are believed to be ``similar", where similarity is quantified by a distance metric. By ``similar" patients, we mean patients who have similar risks of developing breast cancer, and experience similar levels of accuracy for the different screening tests. 

\subsubsection{Personalization and ConfidentCare's optimization problem}
Given a certain partitioning $\pi_{M}(\mathcal{X}_{d}, d_{x})$ of the personal feature space, the task of the learner is to learn an active classifier $h_{j} \in \mathcal{H}$ for each partition $\mathcal{C}_j$, that provides (average) performance guarantees for the patients in that partition if the size of the training set is large enough, i.e. larger than the sample complexity\footnote{Note that the training set $S_{m}$ is drawn from the total population of patients, but each active classifier associated with a certain partition is trained using training instances that belong to that partition only.}. This may not be feasible if the size of the training sample is not large enough in every partition, or if the hypothesis set has no feasible hypothesis that have a true FNR less than $\eta$ for the patients in that partition. The following definition captures the extent of granularity with which ConfidentCare can handle the patient's population.    
\begin{defntn}
{\bf ($M$-personalizable problems)} We say that the problem $(\mathcal{H}, S_{m}, \delta, \epsilon, \epsilon_{c}, \mathcal{D})$ is {\it $M$-personalizable} if there exists an $M$-partitioning $\pi_{M}(\mathcal{X}_{d}, d_{x}),$ such that for every partition $\mathcal{C}_{j} \in \pi_{M}(\mathcal{X}_{d}, d_{x})$, $\mathcal{H}$ is PAO-learnable, and we have that $m_{j} \geq N_{\mathcal{H}}^{*}(\epsilon, \epsilon_{c}, \delta),$ where $m_{j} = \left|\mathcal{S}^{j}_{m}\right|$, and $\mathcal{S}^{j}_{m} = \left|\left\{({\bf x}_{i}, y_{i}): i \in [m], {\bf x}_{i,d} \in \mathcal{C}_{j}\right\}\right|$. \,\,\, \IEEEQEDhere      
\end{defntn}
That is, a problem is $M$-personalizable if $\mathcal{H}$ has a non-empty set of feasible hypotheses for every partition, and the number of training samples in every partition is greater than the sample complexity for learning $\mathcal{H}$. 

ConfidentCare is not provided with a feature space partitioning, but is rather required to construct it, i.e. the system should recognize the maximum number of patient subgroups for which it can construct separate active classifiers that meet the accuracy requirements. Partitioning $\mathcal{X}_{d}$ and designing an active classifier for every partition is equivalent to designing a personalized screening policy. Fig. \ref{fgtree} depicts the envisioned output of ConfidentCare for a 2D personal feature space: the feature space is partitioned into 4 partitions, and with each partition, an active classifier (a decision tree) is associated.
\begin{figure}[t!]
    \centering
    \includegraphics[width=3.5 in]{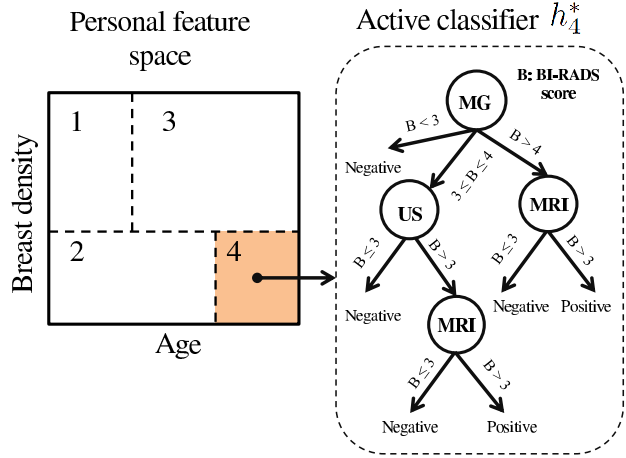}     
    \caption{An exemplary decision tree designed for a specific partition in the personal feature space.}
		\label{fgtree}
\end{figure} 

\begin{figure*}[!t]
\setcounter{mytempeqncnt}{\value{equation}} \setcounter{equation}{4}
\begin{equation}
\Pi = \left\{\pi_{M}(\mathcal{X}_{d}, d_{x}) = \{\mathcal{C}_{1},.\,.\,.,\mathcal{C}_{M}\}\left| \forall \mathcal{C}_{i}\cap \mathcal{C}_{j} = \emptyset, \bigcup_{i=1}^{M} \mathcal{C}_{i} = \mathcal{X}_{d}, \mathcal{C}_{i}\, \forall M \in \{1,2,.\,.\,.,\left|\mathcal{X}_{d}\right|\}\right.\right\}.
\label{eqtn5}
\end{equation}
\setcounter{equation}{\value{mytempeqncnt}+1} \hrulefill{}\vspace*{4pt}
\end{figure*}
\begin{figure*}[!t]
\setcounter{mytempeqncnt}{\value{equation}} \setcounter{equation}{5}
\begin{equation}
\begin{aligned}
& \underset{\pi_{M}(\mathcal{X}_{d}, d_{x}) \in \Pi}{\text{max}}
& &  M \\ 
& \text{{\it s.t.}}
& & (\mathcal{H}, S_m, \epsilon, \delta, \epsilon_{c}, \mathcal{D}) \, \mbox{is $M$-personalizable over $\pi_{M}(\mathcal{X}_{d}, d_{x})$}.
\end{aligned} 
\label{eqtn6}
\end{equation}
\setcounter{equation}{\value{mytempeqncnt}+1} \hrulefill{}\vspace*{4pt}
\end{figure*}
Let $\Pi$ be the set of all possible partitioning maps for the feature space as defined in (\ref{eqtn5}). ConfidentCare aims at maximizing the granularity of its screening policy by partitioning the feature space into the maximum possible number of patient subgroups, such that the active classifier associated with each subgroup of patients ensures that the FNR of this subgroup does not exceed $\eta$, with a confidence level of $1-\delta$. Thus, ConfidentCare is required to solve the optimization problem in (\ref{eqtn6}). Once the optimal partitioning $\pi^{*}_{M}(\mathcal{X}_{d}, d_{x})$ is found by solving (\ref{eqtn6}), the associated cost-optimal classifiers are constructed by solving (\ref{eqtn4}). Designing a screening policy computation algorithm is equivalent to designing a partitioning algorithm $\mathcal{A}^{part}: \mathcal{S}_{m}\rightarrow \Pi$, and a learning algorithm $\mathcal{A}: \mathcal{S}^{j}_{m}\rightarrow \mathcal{H}$. ConfidentCare would operate by running the partitioning algorithm $\mathcal{A}^{part}$ to create a set of partitions of the personal feature space, and then running the learning algorithm $\mathcal{A}$ once for each partition in order to find the appropriate hypothesis for that partition. Ideally, ConfidentCare computes an optimal screening policy if the partitioning found by $\mathcal{A}^{part}$ is a solution to (\ref{eqtn6}). 

\section{ConfidentCare Algorithm: Analysis and design}
We start by analyzing the problem of constructing an optimal screening policy in the next subsection, before we present our proposed algorithm in the following subsection.

\subsection{Optimal screening policies: analysis and technical challenges}
We start exploring the limits of the policy design problem by computing an upper-bound on the maximum level of personalization that can be achieved by any screening policy in the following Theorem. 
\begin{thm}
The maximum level of personalization that can be achieved for the problem $(\mathcal{H}, S_m, \epsilon, \epsilon_{c}, \delta, \mathcal{D})$ is upper-bounded by
\[M^{*} \leq \left\lfloor \frac{m}{N^{*}(\delta, \epsilon, \epsilon_{c})}\right\rfloor,\]
where $M^{*}$ is the solution for (\ref{eqtn6}).        
\end{thm}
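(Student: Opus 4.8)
The plan is to prove this purely by a counting argument that exploits the definition of $M$-personalizability together with the observation that any partition of the personal feature space $\mathcal{X}_d$ necessarily induces a partition of the training sample $S_m$. The only quantitative ingredient needed is the requirement, built into $M$-personalizability, that each cluster contain at least $N^*(\delta,\epsilon,\epsilon_c)$ training instances in order for $\mathcal{H}$ to be PAO-learnable over that cluster.

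First I would let $\pi^{*}_{M^{*}}(\mathcal{X}_d, d_x) = \{\mathcal{C}_1, \ldots, \mathcal{C}_{M^{*}}\}$ denote an optimal partitioning attaining the maximum $M^{*}$ in (\ref{eqtn6}); by the very statement of the optimization problem such a partitioning exists and renders the problem $M^{*}$-personalizable. Next I would use the fact that the clusters satisfy $\mathcal{C}_i \cap \mathcal{C}_j = \emptyset$ for $i \neq j$ and $\bigcup_{i=1}^{M^{*}} \mathcal{C}_i = \mathcal{X}_d$, as recorded in (\ref{eqtn5}), so that every training instance $({\bf x}_i, y_i)$ has its personal feature vector ${\bf x}_{i,d}$ lying in exactly one cluster $\mathcal{C}_j$. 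Consequently the sets $\mathcal{S}^{j}_{m}$ form a genuine partition of the full sample $S_m$, and their cardinalities satisfy $\sum_{j=1}^{M^{*}} m_j = m$.

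Then I would invoke the $M^{*}$-personalizability condition directly, which forces $m_j \geq N^*(\delta,\epsilon,\epsilon_c)$ for every $j \in \{1, \ldots, M^{*}\}$. Summing these $M^{*}$ inequalities gives $m = \sum_{j=1}^{M^{*}} m_j \geq M^{*}\, N^*(\delta,\epsilon,\epsilon_c)$, hence $M^{*} \leq m / N^*(\delta,\epsilon,\epsilon_c)$. Finally, since $M^{*}$ is a nonnegative integer, this bound can be sharpened to $M^{*} \leq \lfloor m / N^*(\delta,\epsilon,\epsilon_c) \rfloor$, which is exactly the claimed inequality.

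I do not expect a genuine obstacle here: the whole argument is a pigeonhole estimate. The single step meriting care is the claim that the feature-space partition induces a clean partition of the training sample; this rests on the disjointness-and-covering property of $\pi_M$ from (\ref{eqtn5}) and on each ${\bf x}_{i,d}$ being a well-defined point of $\mathcal{X}_d$, so that assigning each instance to the unique cluster containing its personal features is unambiguous. Everything downstream is elementary arithmetic together with the integrality of $M^{*}$.
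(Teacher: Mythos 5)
Your proof is correct and is essentially the same pigeonhole argument the paper relies on: the disjointness and covering conditions in (\ref{eqtn5}) make $\{\mathcal{S}^{j}_{m}\}_{j}$ a partition of $S_{m}$ with $\sum_{j} m_{j} = m$, the definition of $M$-personalizability forces $m_{j} \geq N^{*}(\delta,\epsilon,\epsilon_{c})$ for each $j$, and summing plus integrality of $M^{*}$ yields the floor bound. No gaps.
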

\begin{proof}
See Appendix A. \, \IEEEQEDhere  
\end{proof}
Theorem 1 captures the intuitive dependencies of the level of personalization on $m$ and $(\epsilon, \epsilon_{c}, \delta)$. As the training sample size increases, a finer granularity of the screening policy can be achieved, whereas decreasing any of $(\epsilon, \epsilon_{c}, \delta)$ will lead to a coarser policy that has less level of personalization.

While Theorem 1 gives an upper-bound on the possible level of personalization, it does not tell whether such a bound is indeed achievable, i.e. is there a computationally-efficient partitioning algorithm $\mathcal{A}^{part}$, and a learning algorithm $\mathcal{A}$, through which we can we construct an optimal personalized screening policy given a hypothesis set $\mathcal{H}$ and a training sample $S_{m}$? In fact, it can be shown that for any hypothesis class $\mathcal{H}$, the problem of finding the maximum achievable level of personalization in (\ref{eqtn6}) is NP-hard. Thus, there is no efficient polynomial-time algorithm $\mathcal{A}^{part}$ that can find the optimal partitioning of the personal feature space, and hence ConfidentCare has to discover the granularity of the personal feature space via a heuristic algorithm as we will show in the next subsection.

Now we focus our attention to the learning algorithm $\mathcal{A}$. Given that we have applied a heuristic partitioning algorithm $\mathcal{A}^{part}$ to the training data, and obtained a (suboptimal) partitioning $\pi_{M}(\mathcal{X}_{d}, d_{x})$, what hypothesis set $\mathcal{H}$ should we use, and what learning algorithm $\mathcal{A}$ should we chose in order to learn the best active classifier for every partition? In order to answer such a question, we need to select both an appropriate hypothesis set and a corresponding learning algorithm. We start by studying the learnability of a specific class of hypothesis sets.  
\begin{thm}
A finite hypothesis set $\mathcal{H}$, with $\left|\mathcal{H}\right| < \infty$, is PAO-learnable over a partition $\mathcal{C}_{j} \in \pi_{M}(\mathcal{X}_{d}, d_{x})$ if and only if $\inf_{h \in \mathcal{H}} \mbox{FNR}_{j}(h) \leq \eta,$ where $\mbox{FNR}_{j}$ is the FNR of patients in partition $\mathcal{C}_{j}$. 
\end{thm}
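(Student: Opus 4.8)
The statement is a biconditional whose two directions are very asymmetric in difficulty, so the plan is to dispatch the ``only if'' direction immediately from the definition and spend the effort on the ``if'' direction by exhibiting a concrete learner and proving its two concentration guarantees. For the forward (``only if'') direction, suppose $\mathcal{H}$ is PAO-learnable over $\mathcal{C}_j$. The first clause of the PAO-learnability definition forces $\mathcal{H}^* = \{h \in \mathcal{H}: \mathrm{FNR}_j(h) \le \eta\} \ne \emptyset$, so some $h$ satisfies $\mathrm{FNR}_j(h) \le \eta$, whence $\inf_{h \in \mathcal{H}} \mathrm{FNR}_j(h) \le \eta$. Nothing more is needed here.

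For the reverse (``if'') direction, assume $\inf_{h \in \mathcal{H}} \mathrm{FNR}_j(h) \le \eta$. Because $\mathcal{H}$ is finite the infimum is attained, so $\mathcal{H}^* \ne \emptyset$ and $h^* = \arg\min_{h \in \mathcal{H}^*} C(h)$ is well defined, giving the first PAO clause for free. It then remains to build an algorithm $\mathcal{A}$ meeting the two probabilistic bounds with polynomial sample complexity. I would take $\mathcal{A}$ to be the \emph{constrained empirical cost minimizer} restricted to the sub-sample $\mathcal{S}^j_m$ falling in $\mathcal{C}_j$: for each $h \in \mathcal{H}$ form the empirical estimates $\widehat{\mathrm{FNR}}_j(h)$ and $\widehat{C}(h)$, and output $\hat h = \arg\min\{\widehat{C}(h): \widehat{\mathrm{FNR}}_j(h) \le \eta + \epsilon/2\}$, where relaxing the empirical threshold by $\epsilon/2$ is the device that keeps $h^*$ inside the empirical feasible set.

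The analytic core is a uniform-convergence argument exploiting finiteness. Every estimated quantity — $\mathrm{FPR}$, the normalized monetary cost $\bar c$, hence $C = \gamma\,\mathrm{FPR} + (1-\gamma)\bar c$, and $\mathrm{FNR}$ — is a (conditional) mean of a random variable bounded in $[0,1]$, so for each fixed $h$ Hoeffding's inequality controls its empirical deviation, and a union bound over the $|\mathcal{H}| < \infty$ hypotheses makes the deviation uniform. Taking $m_j \ge N^*_{\mathcal{H}}(\epsilon,\epsilon_c,\delta) = O\!\big(\log(|\mathcal{H}|/\delta)/\min(\epsilon^2,\epsilon_c^2)\big)$ guarantees, with probability at least $1-\delta$, that $|\widehat{\mathrm{FNR}}_j(h)-\mathrm{FNR}_j(h)| \le \epsilon/2$ and $|\widehat C(h)-C(h)| \le \epsilon_c/2$ simultaneously for all $h$; this bound is manifestly polynomial in $1/\epsilon$, $1/\epsilon_c$, $1/\delta$. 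On this good event $h^*$ is empirically feasible ($\widehat{\mathrm{FNR}}_j(h^*) \le \eta + \epsilon/2$), so $\widehat C(\hat h) \le \widehat C(h^*)$, which unwinds to $C(\hat h) \le C(h^*)+\epsilon_c$; and $\widehat{\mathrm{FNR}}_j(\hat h) \le \eta + \epsilon/2$ unwinds to $\mathrm{FNR}_j(\hat h) \le \eta + \epsilon$.

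I expect two points to be the real obstacles rather than the Hoeffding bookkeeping. The first is the constrained nature of the problem: a naive minimizer using the exact threshold $\eta$ can reject $h^*$ whenever $\mathrm{FNR}_j(h^*)$ sits right at $\eta$, so the $\epsilon/2$ slack is essential, and one must verify it preserves near-optimality in cost and near-feasibility in FNR at once. The second is that $\mathrm{FNR}$ and $\mathrm{FPR}$ are \emph{conditional} probabilities (conditioned on $y=1$, resp. $y=0$, and on $h$ issuing a final label in $\mathcal{Y}$), so their empirical counterparts are ratios with random denominators; to apply Hoeffding cleanly I would condition on the counts of positive (resp. negative) examples in $\mathcal{S}^j_m$, which concentrate around $m_j\,\mathbb{P}(\cdot)$, and absorb their fluctuation into the sample-complexity constant, assuming the conditioning events have probability bounded away from zero. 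The last detail is to reconcile the stated bound $\mathrm{FNR}(\hat h) \le \mathrm{FNR}(h^*)+\epsilon$ with the feasibility bound $\mathrm{FNR}(\hat h)\le \eta+\epsilon$, which coincide exactly in the tight case $\mathrm{FNR}_j(h^*)=\eta$.
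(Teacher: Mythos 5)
Your architecture coincides with the paper's: the ``only if'' direction falls out of the first clause of Definition 2, and the ``if'' direction rests on finite-class uniform convergence (Hoeffding plus a union bound over $|\mathcal{H}|$ --- the Glivenko--Cantelli property the paper invokes) applied to an empirical constrained cost minimizer, with the same sample complexity $N^{*}_{\mathcal{H}}(\epsilon,\epsilon_c,\delta)=\frac{\log(4|\mathcal{H}|/\delta)}{2\min\{\epsilon^{2},\epsilon_c^{2}\}}$ that the paper states after Theorem 2. However, the issue you defer to your last sentence is not a reconcilable detail but a genuine gap in the ``if'' direction as written. Definition 2 requires $\mathbb{P}\left(\mbox{FNR}(\mathcal{A}(S_m)) \geq \mbox{FNR}(h^{*}) + \epsilon\right) \leq \delta$, anchored at $\mbox{FNR}(h^{*})$, not at $\eta$. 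Your relaxed threshold $\eta+\epsilon/2$ admits into the empirical feasible set hypotheses whose true FNR is as large as $\eta+\epsilon$; such a hypothesis lies outside $\mathcal{H}^{*}$, so the optimality of $h^{*}$ over $\mathcal{H}^{*}$ says nothing about its cost, and it can have strictly smaller cost than $h^{*}$ and therefore be the output $\hat{h}$. Whenever $\mbox{FNR}_{j}(h^{*})$ is strictly below $\eta$, the guarantee $\mbox{FNR}_{j}(\hat{h})\leq \eta+\epsilon$ then fails to imply $\mbox{FNR}_{j}(\hat{h})\leq \mbox{FNR}_{j}(h^{*})+\epsilon$, so your learner can violate the second PAO clause with probability far exceeding $\delta$. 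To close this you must either argue (from the paper's own informal gloss, ``within an $\epsilon$ from the feasible region'') that the clause is to be read as approximate feasibility relative to $\eta$ --- which is what the paper's ECCM in (7) targets by \emph{tightening} the empirical constraint to $\eta-\sqrt{\log(4|\mathcal{H}|/\delta)/(2m_j)}$ rather than relaxing it --- or modify the selection rule so that the output's FNR is controlled relative to $h^{*}$.

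A second, smaller point: you correctly note that $\mbox{FNR}$ and $\mbox{FPR}$ are conditional probabilities, so their empirical versions are ratios with random denominators, and you propose to condition on the class counts. That repair only works if $\mathbb{P}(y=1 \mid {\bf x}_{d}\in\mathcal{C}_{j})$ (and the probability that $h$ reaches a label in $\mathcal{Y}$) is bounded away from zero, an assumption absent from the theorem statement; you should state it explicitly, and note that the degradation of the sample complexity by the inverse of that probability must still be polynomial in the stated parameters for PAO-learnability to hold. The paper silently makes the same idealization by writing its sample complexity as though the FNR were an unconditional mean of $[0,1]$-valued variables, so you are being more careful than the source here, but the care has to be carried through to a modified $N^{*}_{\mathcal{H}}$ rather than left as a remark.
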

\begin{proof}
See Appendix B. 
\end{proof}
While the finiteness of the hypothesis set $\mathcal{H}$ is known to the designer, one cannot determine whether such a hypothesis set can support an FNR that is less than $\eta$ since the distribution $\mathcal{D}$ is unknown to the learner. Thus, the learnability of a hypothesis set can only be determined in the learner's training phase, where the learner can infer from the training FNR estimate whether or not $\inf_{h \in \mathcal{H}} \mbox{FNR}(h) \leq \eta$. Theorem 2 also implies that solving the FNR-constrained cost minimization problem using the empirical estimates of both the cost and the FNR will lead to a solution that with probability $1-\delta$ will be within $\epsilon_{c}$ from the optimal value, and within $\epsilon$ from the FNR constraint. Thus, an algorithm $\mathcal{A}$ that solves the constrained optimization problem in (\ref{eqtn4}) ``empirically" is a ``good" learner for the hypothesis set $\mathcal{H}$. The key for the result of Theorem 2 is that if $|\mathcal{H}|<\infty,$ then the FNR and cost functions are {\it Glivenko-Cantelli} classes \cite{ref27}, for which the {\it uniform convergence} property is satisfied, i.e. every large enough training sample can be used to obtain a ``faithful" estimate of the costs and the accuracies of all the hypotheses in the set $\mathcal{H}$. We call the class of algorithms that solve optimization problem in (\ref{eqtn4}) using the empirical cost and FNR measures as {\it empirical constrained cost-minimizers} (ECCM).      

\subsection{ConfidentCare design rationale}
Based on Theorem 2 and the fact that (\ref{eqtn6}) is NP-hard, we know that ConfidentCare will comprise a heuristic partitioning algorithm $\mathcal{A}^{part}$ that obtains an approximate solution for (\ref{eqtn6}), and an ECCM learning algorithm $\mathcal{A}$ that picks a hypothesis in $\mathcal{H}$ for every partition. Since problem (\ref{eqtn6}) is NP-hard, we use a {\it Divide-and-Conquer} approach to partition the feature space: we use a simple 2-mean clustering algorithm $\mathcal{A}^{part}$ to split the a given partition in the personal feature space, and we iteratively construct a decision tree using $\mathcal{A}$ for each partition of the feature space, and then split all partitions using $\mathcal{A}^{part}$, until the algorithm $\mathcal{A}$ finds no feasible solution for (\ref{eqtn77}) for any of the existing partitions if they are to be split further. 

The algorithm $\mathcal{A}$ can be any ECCM algorithm, i.e. $\mathcal{A}$ solves the following optimization problem
\begin{equation}
\begin{aligned}
& \mathcal{A}(S^{j}_{m}) = \arg \underset{h \in \mathcal{H}}{\text{min}} 
\, \frac{1}{m_{j}} \sum_{({\bf x},y) \in S^{j}_{m}} \bar{c}\left(h({\bf x}_{s})\right) \\ 
& \text{{\it s.t.}} \,
\frac{\sum_{({\bf x},y) \in S^{j}_{m}} \mathbb{I}_{\left\{h({\bf x}_{s})\neq y, y = 1\right\}}}{\sum_{({\bf x},y) \in S^{j}_{m}} \mathbb{I}_{\left\{y = 1\right\}}} \leq \eta - \sqrt{\frac{\log\left(\left|\mathcal{H}\right|\right) + \log\left(\frac{4}{\delta}\right)}{2 m_{j}}}, 
\end{aligned} 
\label{eqtn77}
\end{equation} 
where the constraint in (\ref{eqtn77}) follows from the sample complexity of $\mathcal{H},$ which is $N^{*}\left(\epsilon, \epsilon_{c}, \delta\right) = \frac{\log\left(4\left|\mathcal{H}\right|/\delta\right)}{2 \min\{\epsilon^{2}, \epsilon^{2}_{c}\}}.$

\subsection{ConfidentCare algorithm}
The inputs to ConfidentCare algorithm can be formally given by
\begin{itemize}    
\item The size-$m$ training data set $S_{m} = ({\bf x}_{i}, y_{i})_{i \in [m]}$.
\item The FNR restriction $\eta$.
\item The confidence level $1-\delta$.
\end{itemize}

The operation of ConfidentCare relies on a clustering algorithm that is a variant of Lloyd's $K$-means clustering algorithm \cite{ref28}. However, our clustering algorithm will be restricted to splitting an input space into two clusters, thus we implement a 2-means clustering algorithm, for which we also exploit some prior information on the input space. That is, we exploit the risk assessments computed via the {\it Gail model} in order to initialize the clusters centroids \cite{ref06}-\cite{ref09}, thereby ensuring fast convergence. Let ${\bf G}: \mathcal{X}_{d} \rightarrow [0,1]$ be Gail's risk assessment function, i.e. a mapping from a patient's personal feature to a risk of developing breast cancer. Moreover, we use a distance metric that incorporates the risk assessment as computed by the Gail model in order to measure the distance between patients. The distance metric used by our algorithm is 
\[d(x,x^{'}) = \beta ||x-x^{'}|| + (1-\beta) |{\bf G}(x)-{\bf G}(x^{'})|.\]
The parameter $\beta$ quantifies how much information from the Gail model are utilized to measure the similarity between patients. Setting $\beta = 0$ is equivalent to stratifying the risk space, whereas $\beta = 1$ is equivalent to stratifying the feature space. The value of $\beta$ needs to be learned as we show later in Section V-B. 

Our clustering function, which we call $Split(\bar{\mathcal{X}}_{d}, d_{x}, \tau, \Delta)$ takes as inputs: a size-$N$ subset of the personal feature space (training set) $\bar{\mathcal{X}}_{d} = \{{\bf x}_{d}^{1},{\bf x}_{d}^{2},.\,.\,.,{\bf x}_{d}^{N}\} \subset \mathcal{X}_{d}$, a distance metric $d_{x}$, a Gail model parameter $\tau$, and a precision level $\Delta$. The function carries out the following steps:   
\begin{itemize}
\item Compute the risk assessments $\left\{{\bf G}({\bf x}^{i}_{d}, \tau)\right\}_{i=1}^{N}$ for all vectors in the (finite) input space using the Gail model. The parameter $\tau$ corresponds to the time interval over which the risk is assessed, i.e. ${\bf G}({\bf x}^{i}_{d}, \tau)$ is the probability that a patient with a feature vector ${\bf x}_{d}$ would develop a breast cancer in the next $\tau$ years. 
\item Set the initial centroids to be $\mu_{1} = {\bf x}_{d}^{i_{*}},$ where $i_{*} = \arg \min_{i} {\bf G}({\bf x}^{i}_{d}, \tau)$, and $\mu_{2} = {\bf x}_{d}^{i^{*}},$ where $i^{*} = \arg \max_{i} {\bf G}({\bf x}^{i}_{d}, \tau)$.
\item Create two empty sets $\mathcal{C}_{1}$ and $\mathcal{C}_{2}$, which represent the members of each cluster.
\item Until convergence (where the stopping criterion is determined by $\Delta$), repeat the following: assign every vector ${\bf x}^{i}_{d}$ to $\mathcal{C}_{1}$ if $d_{x}({\bf x}^{i}_{d}, \mu_{1}) < d_{x}({\bf x}^{i}_{d}, \mu_{2}),$ and assign it to $\mathcal{C}_{2}$ otherwise. Update the clusters' centroids as follows 
\[\mu_{j} = \frac{1}{\left|\mathcal{C}_{j}\right|} \sum_{i=1}^{N}\mathbb{I}_{{\bf x}^{i}_{d} \in \mathcal{C}_{j}} {\bf x}^{i}_{d}, j \in \{1,2\}.\]   
\item Return the clusters' centroids $\mu_{1}$ and $\mu_{2}$.
\end{itemize}
The rationale behind selecting the initial centroids as being the feature vectors with maximum and minimum risk assessments is that those two patients' features are more likely to end up residing in different clusters. A detailed pseudocode for the clustering function is given in Algorithm 1. As we will show later, ConfidentCare will utilize this function to iteratively partition the personal feature space.
\RestyleAlgo{boxruled}
\LinesNumbered
\begin{algorithm}
 \algorithmicrequire{ A set $N$ training vectors $\bar{\mathcal{X}}_{d}$, $K > M$, a distance metric $d_{x}$, a Gail model parameter $\tau$, and a precision level $\Delta$.} \\
 \algorithmicensure{ Two centroids $\mu_{1}$ and $\mu_{2}$}\;
 Initialize $D_{-1} = 1$, $D_{0} = 0$, $k = 0$, and
$\mu_{1} = {\bf x}_{d}^{i_{*}}, i_{*} = \arg \min_{i} {\bf G}({\bf x}^{i}_{d}, \tau),$ \;
$\mu_{2} = {\bf x}_{d}^{i^{*}}, i^{*} = \arg \max_{i} {\bf G}({\bf x}^{i}_{d}, \tau)$ \;    
$\mathcal{C}_{1} = \emptyset, \mathcal{C}_{2} = \emptyset$  \;
 \While{$\frac{D_{k-1}-D_{k}}{D_{k}} > \Delta$}{
  	$\mathcal{C}_{1} = \left\{{\bf x}^{i}_{d}\left|\forall {\bf x}^{i}_{d} \in \mathcal{X}_{d}, d_{x}({\bf x}^{i}_{d}, \mu_{1}) < d_{x}({\bf x}^{i}_{d}, \mu_{2})\right.\right\}$  \;
$\mathcal{C}_{2} = \bar{\mathcal{X}}_{d}/\mathcal{C}_{1}$\;
$\mu_{1} = \frac{1}{\left|\mathcal{C}_{1}\right|} \sum_{i=1}^{N}\mathbb{I}_{{\bf x}^{i}_{d} \in \mathcal{C}_{1}} {\bf x}^{i}_{d}$\;
$\mu_{2} = \frac{1}{\left|\mathcal{C}_{2}\right|} \sum_{i=1}^{N}\mathbb{I}_{{\bf x}^{i}_{d} \in \mathcal{C}_{2}} {\bf x}^{i}_{d}$\;
  Set $k \leftarrow k+1$\;
	Compute the {\it 2-means objective function} $D_{k} = \frac{1}{N}\sum_{j=1}^{2}\sum_{i=1}^{N} \mathbb{I}_{{\bf x}^{i}_{d} \in \mathcal{C}_{j}}d_{x}({\bf x}^{i}_{d}, \mu_{j})$\;
 
 }
 \caption{$Split(\bar{\mathcal{X}}_{d}, d_{x}, \tau, \Delta)$.}
\end{algorithm}

For a given feature space partitioning, ConfidentCare builds an active classifier that emulates a ``virtual CPG" for the set of patients within the partition. Designing the active classifier is equivalent to: following an inductive bias approach in which a specific hypothesis class $\mathcal{H}$ is picked, and designing an algorithm $\mathcal{A}$ that takes the training set $S_{m}$ as an input and picks the ``best" hypothesis in $\mathcal{H}$, i.e. $\mathcal{A}(S_{m}) \in \mathcal{H}$. 

Adopting decision trees as a hypothesis set is advantageous since such a classifier is widely used and easily interpretable for medical applications \cite{ref29}-\cite{ref31}. As shown in Fig. \ref{fgtree}, ConfidentCare will associate a decision tree active classifier with every partition of the personal feature space. Such a tree represents the policy to follow with patients who belong to that partition; what tests to recommend and how to map the BI-RADS scores resulting from one test to a new test recommendation or a diagnostic decision. 

Learning the optimal decision tree $h^{*} \in \mathcal{H}$ is known to be an NP-hard problem \cite{ref32}, thus we resort to greedy algorithm $\mathcal{A},$ which we call the confidence-based Cost-sensitive decision tree induction algorithm ($ConfidentTree$). The main idea of $ConfidentTree$ is to select tests (nodes of the tree) in a greedy manner by using a splitting rule that operates as follows: in each step, label the leaves that come out of each possible test such that the pessimistic estimate for the FNR (given the confidence level $1-\delta$) is less than $\eta$, and then pick the test that maximizes the ratio between the information gain and the test cost. After growing such a tree, we apply post-pruning based on confidence intervals of error estimates \cite{ref33}. If there is no possible labeling of the tree leaves that satisfy the FNR requirements, the algorithm reports the infeasibility of the FNR and confidence levels set by the clinician given the training set provided to the program. More precisely, the algorithm $ConfidentTree(S_{m},\pi_{M}(\mathcal{X}_{d}, d_{x}), j, \eta, 1-\delta)$ takes the following inputs:
\begin{itemize}  
\item The size-$m$ training set $S_{m}$.
\item The personal feature space partitioning $\pi_{M}(\mathcal{X}_{d}, d_{x})$.
\item The index $j$ of the partition for which we are designing the active classifier.
\item The FNR constraint $\eta$.
\item The confidence level $1-\delta$.
\end{itemize}
Given these inputs, the algorithm then executes the following steps:
\begin{itemize}
\item Extract the training instances that belong to partition $\mathcal{C}_{j}$.
\item Grow a decision tree with the nodes being the screening tests in $\mathcal{T}$. The edges are the BI-RADS scores with the following thresholds: BI-RADS $<$ 3, BI-RADS $\in \{3,4\},$ and BI-RADS $>$ 4. This classification is based on domain knowledge \cite{ref19}; the first category corresponds to a probably negative diagnosis, the second corresponds to a suspicious outcome, whereas the third corresponds to a probably malignant tumor.   
\item While growing the tree, the splitting rule is as follows: for each test, label the leaves such that the pessimistic estimate (see \cite{ref33} for confidence interval and error estimates in the C4.5 algorithm) for the FNR is equal to $\eta$, and then compute the cost function for each test, and select the test that maximizes the ratio between the information gain and the cost function.
\item Apply post-pruning based on confidence intervals of the error estimates as in the C4.5 algorithm \cite{ref33}. This step is carried out in order to avoid overfitting.
\item If the pessimistic estimate for the FNR exceeds $\eta$, report the infeasibility of constructing a decision tree with the given FNR and confidence requirements. 
\end{itemize}
A detailed pseudocode for $ConfidentTree$ is given in Algorithm 2. ConfidentCare invokes this algorithm whenever the personal feature space is partitioned, and the active classifiers need to be constructed. 
\RestyleAlgo{boxruled}
\LinesNumbered
\begin{algorithm}
 \algorithmicrequire{ A set of training instances $S_{m}$, a partitioning $Part_{M}(\mathcal{X}_{d}, d_{x})$, a partition index $j$, Maximum tolerable FNR $\eta$, and confidence level $1-\delta$.} \\
 \algorithmicensure{ A cost-sensitive decision-tree $h_{j}$ that can be used as an active classifier for partition $\mathcal{C}_{j}$.}\;
Let $B_{1}$ be the event that BI-RADS $<$ 3, $B_{2}$ be that BI-RADS $\in \{3,4\},$ and $B_{3}$ be BI-RADS $>$ 4 \;
Extract the training set that belong to the targeted partition $S^{j}_{m} = \left\{({\bf x}_{i}, y_{i})\left|\forall i \in [m], {\bf x}_{i,d} \in \mathcal{C}_{j}\right.\right\}$\;
For each test, label the leaves attached to edges $B_{1}$, $B_{2}$, and $B_{3}$ such that the empirical FNR is less than the solution of the following equation for $\hat{F}$
\[\eta = \frac{\hat{F} + \frac{Q^{-1}(\delta)}{2n} + Q^{-1}(\delta)\sqrt{\frac{\hat{F}}{n} - \frac{\hat{F}^{2}}{n} + \frac{Q^{-1}(\delta)^{2}}{4n^{2}}}}{1+\frac{Q^{-1}(\delta)^{2}}{n}},\]
where $Q(.)$ is the Q-function and $n$ is the number of training instances covered by the leaf for which the classification is 1. \; Given this labeling, let $\hat{F}_{p}$ be the empirical value of the false positive rate, then pick the test $s \in \mathcal{T}$ that maximizes $\frac{I(s;S^{j}_{m})}{\gamma \hat{F}_{p} + (1-\gamma) \bar{c}_{s}}$, where $I(x;y)$ is the mutual information between $x$ and $y$. \;
Apply post-pruning using confidence intervals for error estimates: a node is pruned if the error estimate of its induced sub-tree i s lower than error estimate of the node.
 \caption{$ConfidentTree(S_{m},Part_{M}(\mathcal{X}_{d}, d_{x}), j, \eta, 1-\delta)$}
\end{algorithm}

ConfidentCare uses the modules $ConfidentTree$ and $Split$ in order to iteratively partition the feature space and construct active classifiers for each partition. ConfidentCare runs in two stages: the offline policy computation stage, and the policy execution stage. In the offline policy computation stage, the following steps are carried out:
\begin{enumerate}
\item Use the $Split$ function to split all current partitions of the personal feature space.  
\item Use the $ConfidentTree$ to create new active classifiers for the split partitions, if constructing a decision tree for a specific partition is infeasible, stop splitting this partition, otherwise go to step (1).
\end{enumerate} 
After computing the policy, ConfidentCare handles the incoming patients in the policy execution stage as follows:
\begin{enumerate}
\item Observe the personal features of the incoming patient, measure the distance between her feature vector and the centroids of the learned partitions, and associate her with the closest partition and the associated active classifier.   
\item Apply active classification to the patient. After each test outcome, ConfidentCare prompts a recommended test (the next node in the decision tree), and an intermediate diagnosis together with an associated confidence interval. The clinician and the patient will then decide whether or not to proceed and take the next test.  
\end{enumerate}   
The pseudocode for ConfidentCare in both the offline and online modes is given in Algorithm 3. In the following Theorem, we show that the greedy ConfidentCare algorithm can guarantee a reasonable performance.

\RestyleAlgo{boxruled}
\LinesNumbered
\begin{algorithm}
 \algorithmicrequire{A training set $S_{m}$, required confidence level $\delta$, and FNR constraint $\eta$.} \\
 \algorithmicensure{ A sequence of recommendations, intermediate diagnoses with confidence intervals, and a final diagnosis}\;
{\bf Offline policy computation stage:} \;
	Initialize $M = \infty, q = 0$\;
	Initialize ${\bf \mu} = \emptyset$ (set of centroids of the personal feature space) \;
	Hyper-parameters $\tau$, $\gamma$, and $\Delta$ can be tuned through a validation set\; 
  \While{$q \neq M$}{
	$M = \left|{\bf \mu}\right|$ \;
	Create a partitioning $Part(\mathcal{X}_{d}, d_{x})$ based on the centroids in ${\bf \mu}$ \;
	{\bf For} $j = 1$ to $M$ \;
	  \,\,\,\,\,   ${\bf \mu} \rightarrow Split(\mathcal{X}_{d},d_{x},\tau,\Delta)$\;
    \,\,\,\,\,   $h_{j} = ConfidentTree(S_{m},Part_{M}(\mathcal{X}_{d}, d_{x}), j, \eta, 1-\delta)$ \;
		\,\,\,\,\,	 If $h_{j}$ is infeasible: $q \leftarrow q+1$ \;
  {\bf EndFor} 
 }
{\bf Policy execution stage:} \;
For the incoming patient $i$, find the partition it belongs to by computing the distance $d_{x}({\bf x}_{i,d}, \mu_{j})$ for every partition $\mathcal{C}_{j}$, and associate it with the partition $j^{*}$ that gives the minimum distance \;
Use classifier $h_{j^{*}}$ to recommend tests and issue diagnoses
 \caption{$ConfidentCare\left(S_{m}, \delta, \eta\right)$.}
\end{algorithm}	
\begin{figure*}[t!]
    \centering
    \includegraphics[width=6.5 in]{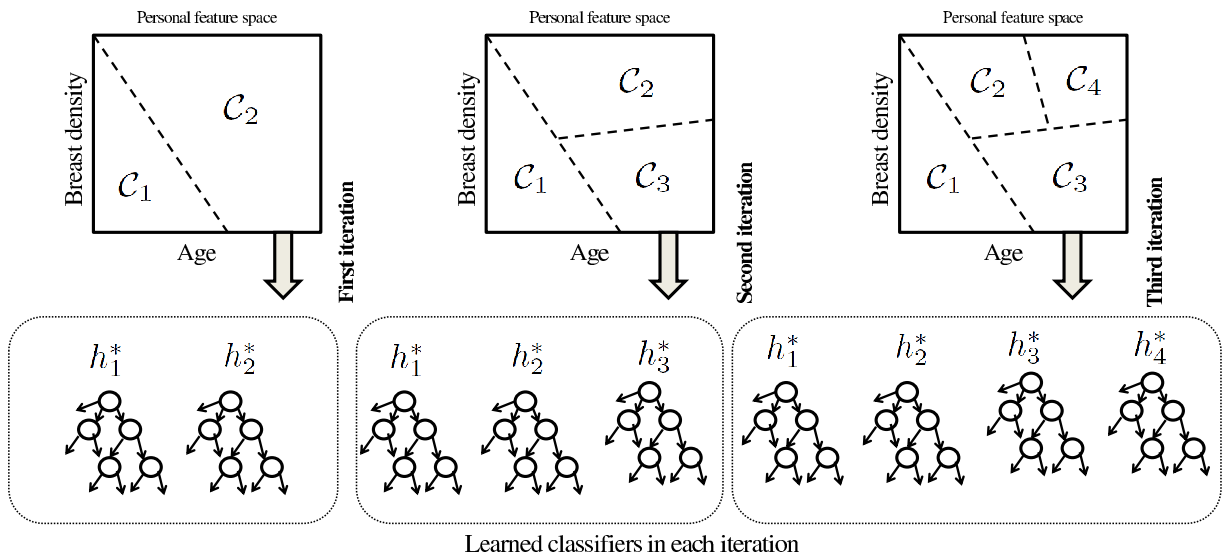}     
    \caption{Demonstration for the operation of ConfidentCare iterative algorithm. In each iteration, the personal feature space is split and a decision tree is learned for the newly emerging partition of the space.}
		\label{fgiter}
\end{figure*} 
Fig. \ref{fgiter} demonstrates the operation of the iterative algorithm; in each iteration, partitions are split as long as a decision tree for the new partitions are feasible, and the corresponding decision trees are learned. The end result is a set of decision trees for the different partitions, representing different policies to be followed for every class of patients. Following the CPGs correspond to having a single decision tree for the entire personal feature space, which may consistently perform poorly over specific partitions of the feature space, i.e. specific subgroups of patients. 

\section{Experiments}
In this section, we demonstrate the operation of ConfidentCare by applying it to a real-world dataset for breast cancer patients. Moreover, we evaluate the performance of ConfidentCare and the added value of personalization by comparing it with CPGs, and policies that are designed in a ``one-size-fits-all" fashion. We start by describing the dataset used in all the experiments in the following subsection. 

\subsection{Data Description}
We conduct all the experiments in this section using a de-identified dataset of 25,594 individuals who underwent screening via mammograms, MRIs and ultrasound at the UCLA medical center. The features associated with each individual are: age, breast density, ethnicity, gender, family history, age at menarche, age at the first child birth and hormonal history. Each individual has underwent at least one of three screening tests: a mammogram (MG), an MRI, an ultrasound (US), or a combination of those. With each test taken, a BI-RADS score is associated. Table \ref{Dataset} shows the entries of the dataset and the features associated with every patient. The dataset is labeled by $0$ for patients who have a negative diagnosis, and $1$ for patients who have a positive diagnosis (malignant tumor). The dataset exhibits a sampling bias in the sense that patients who took a MG are much more than those who took an US or an MRI. Moreover, most patients exhibited negative test results. Table \ref{Dataset2} lists the percentages of patients who took each screening test, and the percentage of patients with positive diagnoses. All features were converted into numerical values and normalized to take values between 0 and 1. The normalized monetary costs for MG, US, and MRI where set to 0.1, 0.2 and 0.7 respectively, and $\gamma$ is set to 0.5. In the following subsection, we demonstrate the operation of ConfidentCare.         

\begin{table*}
\captionsetup{font= small}
\caption{De-identified breast cancer screening tests dataset}
\begin{center}
 \centering
    \begin{tabular}{||M{1.4cm}||M{0.75cm}|M{2.25cm}|M{1cm}|M{0.75cm}|M{1.75cm}|M{1.25cm}|M{1.5cm}|M{1.5cm}|M{1.5cm}|M{1.5cm}|} \hline
        {\bf Patient ID} & {\bf Age} & {\bf Breast density} & {\bf Ethnicity} & {\bf Gender} & {\bf Family history} & {\bf Hormonal history} &  {\bf Mammogram BI-RADS score} & {\bf MRI BI-RADS score} & {\bf Ultrasound BI-RADS score} \\ \hline \hline
				    1 & 71 & Almost entirely fat ($<$25$\%$) & U & F & Maternal Aunt & - & 1 & - & - \\ \hline
    2 & 72 & Almost entirely fat ($<$25$\%$) & U & F & Maternal Cousin & Estrogen & 2 & 1 & -   \\ \hline
    3 & 60 & Heterogeneously dense (51$\%$ - 75$\%$) & B & F & - & Estrogen & 2 & - & -  \\ \hline
		4 & 66 & Almost entirely fat ($<$25$\%$) & W & F & Sister & - & 1 & - & -   \\ \hline
		5 & 56 & Heterogeneously dense (51$\%$ - 75$\%$) & W & F & - & - & 1 & - & -   \\ \hline
		. & . & . & . & . & . & . & . & . & . \\ 
		. & . & . & . & . & . & . & . & . & . \\ \hline
		11,733 & 39 & Heterogeneously dense (51$\%$ - 75$\%$) & A & F & - & - & 2 & 1 & -  \\ \hline
		. & . & . & . & . & . & . & . & . & . \\
		. & . & . & . & . & . & . & . & . & . \\  \hline
		25,594 & 67 & Heterogeneously dense (51$\%$ - 75$\%$) & W & F & Mother & Tamoxifen & 2 & 2 & 1  \\
    \hline
    \end{tabular}
\end{center}
\label{Dataset}
\end{table*}
\begin{table}[t!]
\captionsetup{font= small}
\caption{Statistics for the dataset involved in the experiments}
\begin{center}
 \centering
    \begin{tabular}{||M{4cm}||M{3.75cm}|} \hline
        {\bf Category} & {\bf Percentage} \\ \hline \hline
				    MG BI-RADS & 93.39$\%$.   \\ \hline
						MRI BI-RADS & 2.75$\%$.  \\ \hline
						US BI-RADS & 9.21$\%$. \\ \hline
						Patients with malignant tumor & 8.33$\%$. \\  
    \hline
    \end{tabular}
\end{center}
\label{Dataset2}
\end{table}  
\subsection{Learning the distance metric}
\begin{figure}[t!]
    \centering
    \includegraphics[width=3.5 in]{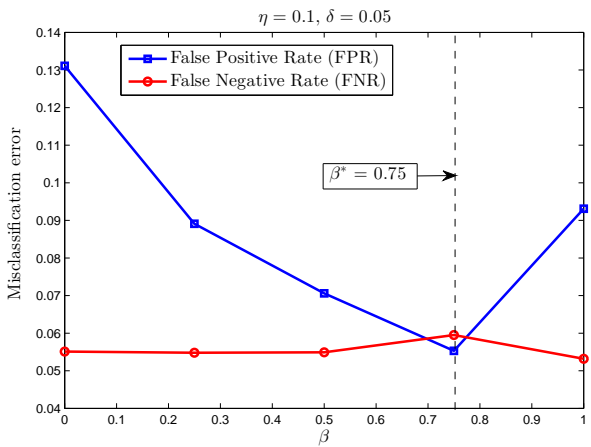}     
    \caption{Optimal selection for the distance metric parameter $\beta$ for $\eta = 0.1$ and $\eta = 0.1$.}
		\label{sm1}
\end{figure}
\begin{figure}[t!]
    \centering
    \includegraphics[width=3.5 in]{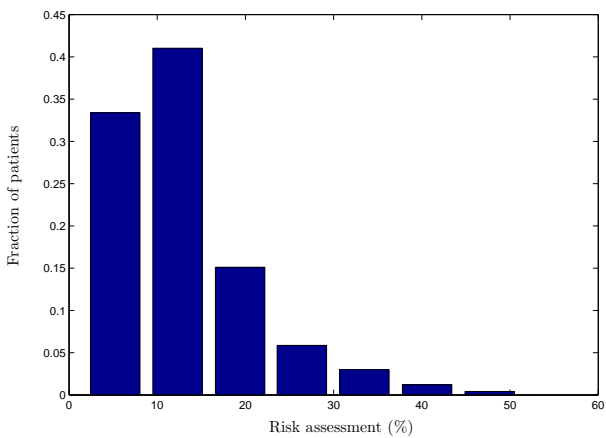}     
    \caption{Histogram for the Gail risk assessments for patients in the dataset.}
		\label{sm2}
\end{figure}
Recall from Section IV that clustering of the patients' personal  feature space was carried out using a distance metric that combines both the feature values and the risk assessments as computed by the Gail risk model using the parameter $\beta$. Setting the parameter $\beta = 0$ corresponds to risk stratification, whereas setting $\beta = 1$ corresponds to stratifying the personal feature space while disregarding the prior information provided by the Gail model. Since the Gail model does not incorporate all the patients features (e.g. family history), one expects that the best choice of $\beta$ will be between 0 and 1, for that both the personal features and the risk assessments of the patients contains (non-redundant) information about patients' similarity. As shown in Fig. \ref{sm1}, for an FNR constraint of $\eta = 0.1$ and confidence parameter of $\delta = 0.05$, we found that $\beta = 0.75$ is the best choice of the distance metric since it maximizes the system's accuracy (FNR and FPR). This means that for $\eta = 0.1$ and $\delta = 0.05$, it is better to incorporate more information from the personal features than from the risk assessment. Our interpretation for such a result is that since most of the patients in the dataset have a low to average risks as shown in the histogram plotted in Fig. \ref{sm2}, the information contained in the Gail risk assessment is not enough to differentiate between patients and bundle them into clusters. For the rest of this section, we use the value $\beta = 0.75$ when running ConfidentCare in all the experiments.       

\subsection{ConfidentCare operation}
\begin{figure}[t!]
    \centering
    \includegraphics[width=3.5 in]{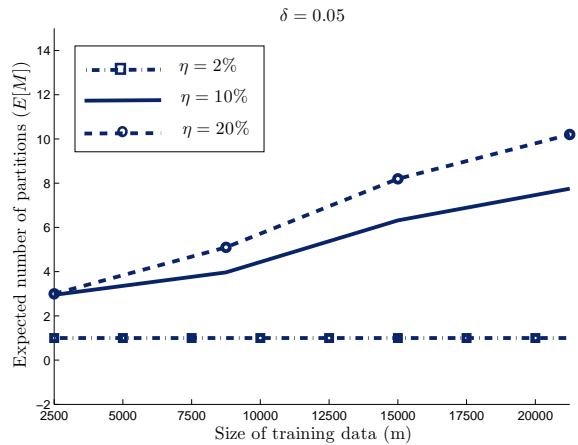}     
    \caption{The expected number of partitions (clusters) of the personal feature space versus the size of the training set .}
		\label{sm3}
\end{figure}
\begin{figure}[t!]
    \centering
    \includegraphics[width=3.5 in]{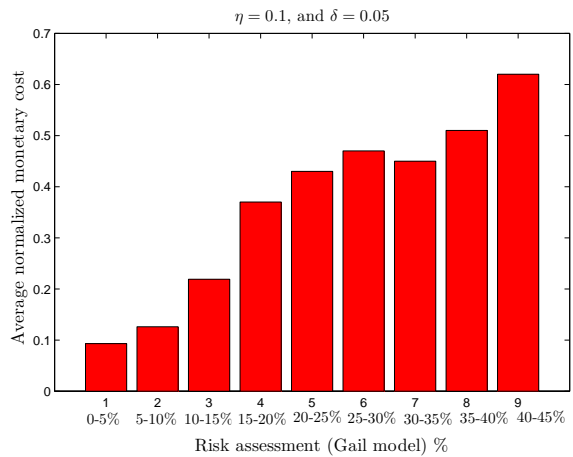}     
    \caption{Average normalized monetary cost endured by ConfidentCare for patients with different risk assessments.}
		\label{sm33}
\end{figure}
\begin{figure}[t!]
    \centering
    \includegraphics[width=3.5 in]{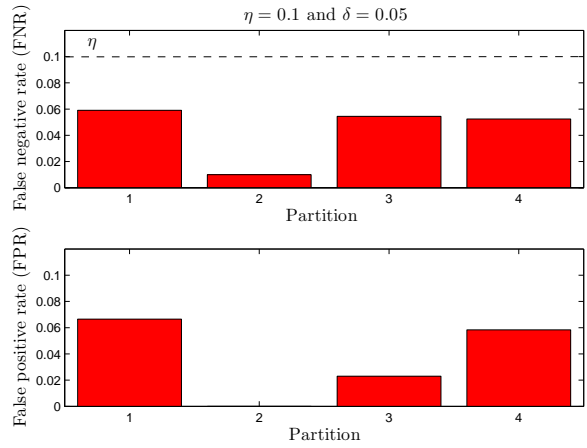}     
    \caption{FNR and FPR of ConfidentCare for different partitions of the personal feature space.}
		\label{sm4}
\end{figure}
\begin{figure*}[t!]
    \centering
    \includegraphics[width=6.5 in]{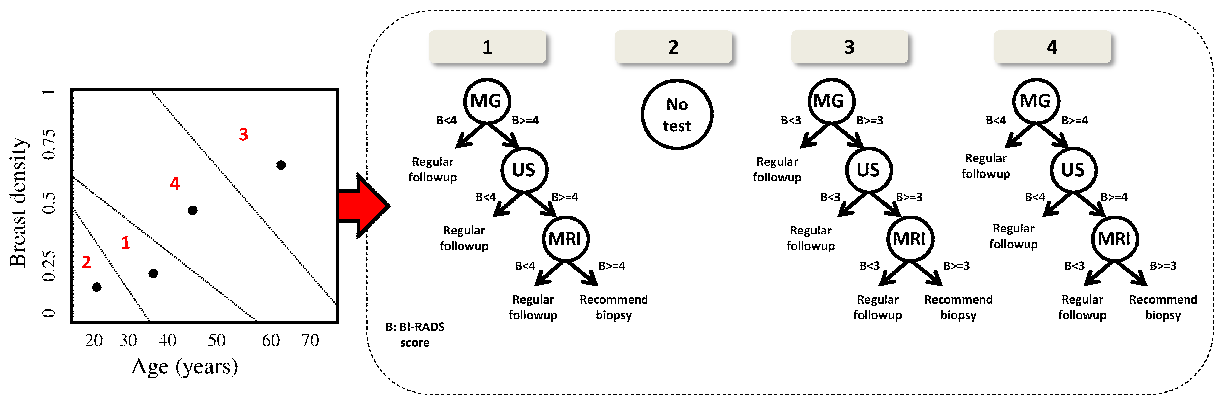}     
    \caption{The personal feature space partitions and the corresponding screening policy.}
		\label{sm6}
\end{figure*}
\begin{figure}[t!]
    \centering
    \includegraphics[width=3.5 in]{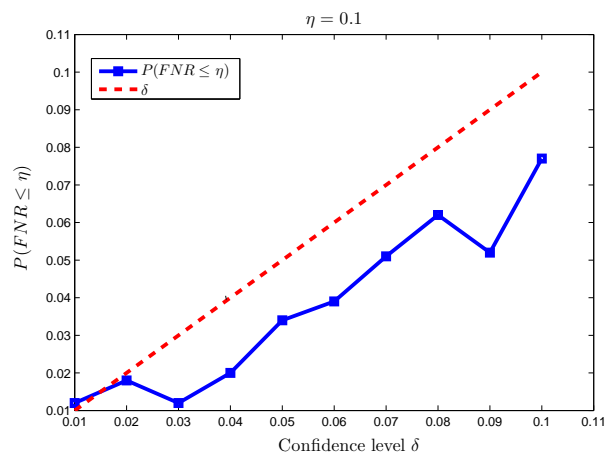}     
    \caption{The probability that the FNR of ConfidentCare is below $\eta$ versus the confidence parameter $\delta$.}
		\label{sm5}
\end{figure}
In this subsection, we investigate the operation of ConfidentCare in terms of clustering and policy construction, endured monetary costs, and accuracy. As we can see in Fig. \ref{sm3}, ConfidentCare can (on average) discover more subgroups of patients for whom it can construct a screening policy with the desired confidence level as the size of the training data increases. This is intuitive since the more training instances are available, the more granular are the partitions that can be formed by the algorithm over the personal feature space. Note that for different settings for the constraint $\eta$, the possible levels of stratification are different. For a fixed size of the training data, as the FNR constraint becomes tighter, the level of personalization decreases. For instance, we can see in Fig. \ref{sm3} that the expected number of partitions for $\eta = 0.2$ is greater than that for $\eta = 0.1$, whereas for $\eta = 0.02$ the system can never find any feasible partitioning of the feature space regardless of the size of the training data.  

Fig. \ref{sm33} shows the average (normalized) monetary costs endured by ConfidentCare for patients with different risk assessments. As the risk level increases, the costs increase consequently since ConfidentCaare would recommend more tests (including the expensive MRI test) to patients with high level of risk for developing breast cancer. This is again follows from the impact of personalization: only patients who need the screening tests are recommended to take it, for that the screening policy behaves differently for different patient subgroups.   

In Fig. \ref{sm4}, we plot the FNR and FPR with respect to every partition constructed by the algorithm in a specific realization of ConfidentCare which was able to discover 4 partitions. It is clear that the FNR satisfies the constraint of $\eta = 0.1$ for all partitions. The FPR for different partitions, for instance we can see that partition 2 has a FPR of 0, whereas other partitions have a non-zero FPR. In Fig. \ref{sm6}, we show the partitions (in a 2D subspace of the original personal feature space) and the constructed policy corresponding to each cluster. It can be seen that patients who are young in age and have low breast density are recommended to take no tests, whereas other subgroups are recommended to take a MG test. We also note that the policy is more aggressive for patients with high breast density, i.e. for partition 3, a relatively low BI-RADS score from a MG can still lead to a recommendation for an addition US or an MRI, whereas for other subgroups the policy is more conservative in terms of recommending additional screening tests. This is because detecting a tumor is more difficult for patients with high breast density.  

Note that Fig. \ref{sm4} represents just a single realization of ConfidentCare, and thus it does not reveal the amount of confidence we have in the algorithm being satisfying the FNR constraint with a high probability. In order to verify the confidence level in the policy constructed by ConfidentCare, we run the algorithm for 100 runs and see the fraction of time where the FNR in the testing set for any partition exceeds the threshold $\eta$. It can be seen that this is bounded by the specified confidence level $\delta$.     

\subsection{ConfidentCare performance evaluation}
\begin{figure}[t!]
    \centering
    \includegraphics[width=3.5 in]{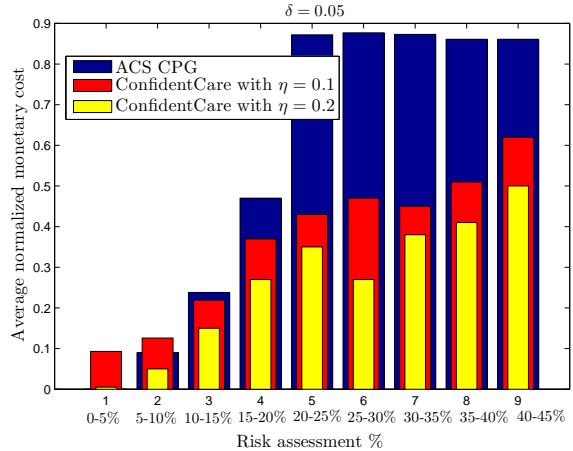}     
    \caption{Average normalized monetary cost versus risk assessment for ConfidentCare and the ACS guidelines.}
		\label{sm7}
\end{figure}
\begin{figure}[t!]
    \centering
    \includegraphics[width=3.5 in]{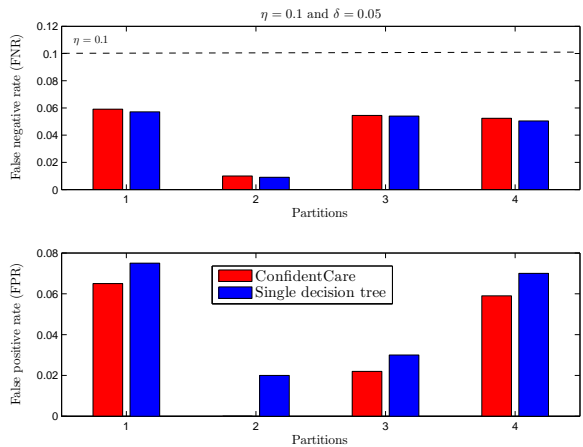}     
    \caption{FNR and FPR of ConfidentCare and a single decision tree of screening tests.}
		\label{sm8}
\end{figure}
We compare the performance of ConfidentCare with that of the current clinical guidelines in order to assess the value of personalization in terms of cost-efficiency. We compare the monetary cost of ConfidentCare with that of the American Cancer Society (ACS) screening guidelines issued in 2015 \cite{ref33X}. The reason for selecting this specific CPG is that it already applies a coarse form of risk stratification: low, average and high risk women are recommended to take different sets of tests. In Fig. \ref{sm7}, we plot the distribution of the normalized monetary cost of ConfidentCare together with that of the ACS over different levels of risk. It is clear that ConfidentCare can save a significant amount of screening costs since it supports a finer stratification of the patients, and thus recommends screening tests only to patients who need them based on both their features the outcomes of the previous tests that they may have taken. The comparison in Fig. \ref{sm7} is indeed subject to the selection of $\eta$ and $\delta$. The more we relax the FNR and confidence constraints, the more savings we attain for the monetary costs. The cost-efficiency of ConfidentCare depends on the selection of $\eta$ and $\delta$, which can be set by clinicians or institutions, and based on which the added value of a personalized system can be assessed.       

Finally, we compare the accuracy of ConfidentCare with that of a single decision tree of tests that is designed in a ``one-size-fits-all" fashion. In particular, we build a tree of tests using the conventional C4.5 algorithm \cite{ref33}, and then compare its performance with that of ConfidentCare with respect to every partition found by ConfidentCare. From Fig. \ref{sm8}, we can see that for the same realization illustrated in Fig. \ref{sm4} and \ref{sm6}, both approaches have a comparable FNR, but ConfidentCare outperforms a single decision tree in terms of the FPR for all the 4 partitions. This is because ConfidentCare deals differently with women belonging to different subgroups as shown in Fig. \ref{sm6}, i.e. for instance women in partition 2 are not recommended to take any tests. In other words, ConfidentCare avoids recommending unnecessary tests, which reduces the rate of false positives. The average values of the FNR and FPR for 50 runs of ConfidentCare and a single decision tree are reported in Table \ref{perftable}, where a gain of $31.91\%$ with respect to the FPR is reported.       
\begin{table}[t!]
\captionsetup{font= small}
\caption{FNR and FPR for ConfidentCare (with $\eta = 0.1$ and $\delta = 0.05$) and a single C4.5 decision tree}
\begin{center}
 \centering
    \begin{tabular}{||M{3.75cm}||M{1.75cm}||M{1.75cm}|} \hline
        {\bf Algorithm} & {\bf FNR} & {\bf FPR} \\ \hline \hline
				    Single C4.5 decision tree & 0.0501 & 0.0488.   \\ \hline
						ConfidentCare & 0.0512 & 0.037.  \\  
    \hline
    \end{tabular}
\end{center}
\label{perftable}
\end{table} 
\subsection{Discussions and future work}
\begin{figure}[t!]
    \centering
    \includegraphics[width=3.5 in]{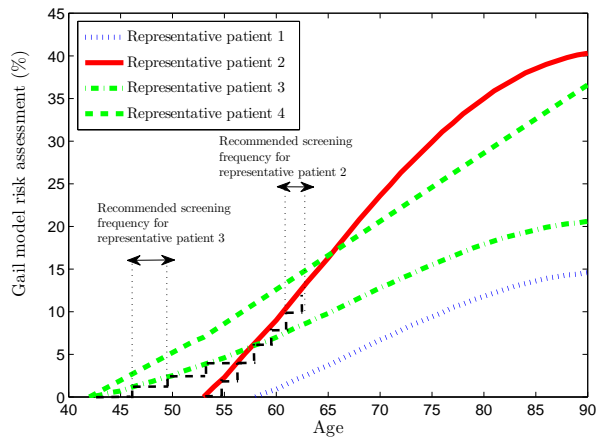}     
    \caption{Risk assessment over time for the representative patients (centroids) constructed by ConfidentCare.}
		\label{sm9}
\end{figure}
The screening policy considered in this paper was concerned with managing the short-term screening procedure, i.e. the policy was recommending a sequence of screening tests for the patient based on the outcomes of those tests, and such tests are expected to be taken in a relatively short time interval. Our framework can be extended to design policies that are concerned with the long-term patient outcomes, and are capable of not only recommend tests to the patient, but also recommend the frequency with which screening should be carried out for different subgroups of patients. To see how our framework can be extended to handle such a setting, we plot the risk of developing breast cancer over time for the representative agents (centroids) of 4 clusters constructed in one realization of the algorithm in Fig. \ref{sm9}. Each cluster exhibits a different rate of risk growth over time, i.e. for instance while clusters 3 and 4 in Fig. \ref{sm9} comprise women of almost the same age, patients in cluster 3 develop a risk for breast cancer more quickly than patients in cluster 4 due to other factors (e.g. family history). Thus, ConfidentCare can be modified to not only recommend a sequence of tests to patients in different clusters, but also to compute the optimal frequency of screening (steps over time for which the patient need to be regularly screened) that would maximize a long-term objective function. Intuitively, the frequency of screening would depend on the slope of the risk assessment over time, i.e. clusters with steeper slopes would demand more frequent screening. Our framework is well suited to capture such a setting, and the ConfidentCare algorithm can be modified to construct a screening policy that maximizes long-term outcomes with high levels of confidence.     

\section{Conclusions}
In this paper, we presented ConfidentCare: a clinical decision support system that learns a personalized screening policy from the electronic health data record. ConfidentCare operates by stratifying the space of a woman? features and learning cost-effective, and accurate screening policy that is tailored to those features and are accurate with a high-level of confidence. ConfidentCare algorithm iteratively stratifies the patients' feature space into disjoint clusters and learns active classifiers associated with each cluster. We have shown that the proposed algorithm has the potential of improving the cost efficiency and accuracy of the screening process compared to current clinical practice guidelines, and state-of-the-art algorithms that do not consider personalization.   

\section*{Acknowledgment}
\addcontentsline{toc}{section}{Acknowledgment}
We would like to thank Dr. Camelia Davtyan (Ronald Reagan UCLA Medical Center) for her valuable help and precious comments on the medical aspects of the paper. We also thank Dr. William Hoiles (UCLA) for the valuable discussions and comments that we had with him on this paper.

\end{document}